\pdfoutput=1
\documentclass{article}

\usepackage[utf8]{inputenc} 
\usepackage[T1]{fontenc}    

\usepackage{amsmath} 
\usepackage{amsfonts} 
\usepackage[scale=1.0]{XCharter}

\usepackage[
  paper  = letterpaper,
  top    = 1.0in,
  bottom = 1.0in,
  ]{geometry}

\usepackage[usenames,dvipsnames,table]{xcolor}
\definecolor{shadecolor}{gray}{0.9}

\usepackage[final,expansion=alltext]{microtype}
\usepackage[english]{babel}
\usepackage[parfill]{parskip}
\usepackage{afterpage}
\usepackage{framed}

{\endMakeFramed}

\usepackage{lineno}

\usepackage{ragged2e}

\newcounter{parcount}

\usepackage{graphicx}
\usepackage{wrapfig}
\usepackage[labelfont=bf,font=footnotesize,width=.9\textwidth]{caption}
\usepackage[format=hang]{subcaption}

\usepackage{booktabs,multirow,multicol}       

\usepackage[algoruled,algo2e]{algorithm2e}
\usepackage{listings}
\usepackage{fancyvrb}
\fvset{fontsize=\normalsize}

\usepackage[colorlinks,linktoc=all]{hyperref}
\usepackage[all]{hypcap}
\hypersetup{citecolor=MidnightBlue}
\hypersetup{linkcolor=MidnightBlue}
\hypersetup{urlcolor=MidnightBlue}

\usepackage[nameinlink, capitalize]{cleveref}

\usepackage[acronym,smallcaps,nowarn]{glossaries}[=v4.46]

\newcommand{\red}[1]{\textcolor{BrickRed}{#1}}

\newcommand{\blue}[1]{\textcolor{MidnightBlue}{#1}}

\lstdefinestyle{mystyle}{
    commentstyle=\color{OliveGreen},
    keywordstyle=\color{BurntOrange},
    numberstyle=\tiny\color{black!60},
    stringstyle=\color{MidnightBlue},
    basicstyle=\ttfamily,
    breakatwhitespace=false,
    breaklines=true,
    captionpos=b,
    keepspaces=true,
    numbers=left,
    numbersep=5pt,
    showspaces=false,
    showstringspaces=false,
    showtabs=false,
    tabsize=2
}
\usepackage{amsthm}
\usepackage{amssymb}

\usepackage{centernot}
\usepackage{nicefrac}       
\usepackage{mathtools}
\usepackage{amsbsy}
\usepackage{amstext}
\usepackage{thmtools}
\usepackage{thm-restate}

\begingroup
    \makeatletter
    \@for\theoremstyle:=definition,remark,plain\do{%
        \expandafter\g@addto@macro\csname th@\theoremstyle\endcsname{%
            \addtolength\thm@preskip\parskip
            }%
        }
\endgroup

\renewcommand{\mid}{~\vert~}

\usepackage{bm}

\usepackage{booktabs,arydshln}
\makeatletter
\def\adl@drawiv#1#2#3{%
        \hskip.5\tabcolsep
        \xleaders#3{#2.5\@tempdimb #1{1}#2.5\@tempdimb}%
                #2\z@ plus1fil minus1fil\relax
        \hskip.5\tabcolsep}
\newcommand{\cdashlinelr}[1]{%
  \noalign{\vskip\aboverulesep
           \global\let\@dashdrawstore\adl@draw
           \global\let\adl@draw\adl@drawiv}
  \cdashline{#1}
  \noalign{\global\let\adl@draw\@dashdrawstore
           \vskip\belowrulesep}}
\makeatother

\renewcommand{\epsilon}{\varepsilon}

\declaretheorem[style=plain,name=Theorem]{theorem}

\declaretheorem[style=plain,sibling=theorem,name=Proposition]{proposition}

\declaretheorem[style=definition,name=Assumption]{assumption}
\declaretheorem[style=definition,sibling=theorem,name=Example]{example}
\declaretheorem[style=remark,sibling=theorem,name=Remark]{remark}
\declaretheorem[style=definition,name=Condition]{condition}

\newenvironment{example*}
 {\pushQED{\qed}\example}
 {\popQED\endexample}
\numberwithin{equation}{section}

\global\long\def\floor#1{\lfloor#1\rfloor}

\DeclareMathOperator*{\argmin}{argmin}

\DeclarePairedDelimiterX\Set[1]{\lbrace}{\rbrace}%
{  #1 }

\usepackage[utf8]{inputenc} 
\usepackage[T1]{fontenc}    
\usepackage{hyperref}       
\usepackage{url}            
\usepackage{booktabs}       
\usepackage{amsfonts}       
\usepackage{nicefrac}       
\usepackage{microtype}      
\usepackage{xcolor}         
\usepackage{microtype}
\usepackage{graphicx}
\usepackage{subcaption}
\usepackage{booktabs} 

\usepackage{amsmath}
\usepackage{amssymb}
\usepackage{mathtools}
\usepackage{amsthm}
\usepackage[normalem]{ulem}
\usepackage{wrapfig,lipsum,booktabs}
\usepackage{booktabs}
\usepackage{multirow}
\usepackage{minimal_macros}

\usepackage[utf8]{inputenc} 
\usepackage[T1]{fontenc}    
\usepackage{hyperref}       
\usepackage{url}            
\usepackage{booktabs}       
\usepackage{amsfonts}       
\usepackage{nicefrac}       
\usepackage{microtype}      
\usepackage[table]{xcolor}  
\usepackage{wrapfig,lipsum,booktabs}
\usepackage{multirow}
\usepackage{subcaption}
\usepackage{algorithm}
\usepackage{algpseudocode}

\definecolor{mydarkblue}{rgb}{0,0.08,0.45}

\hypersetup{ %
    pdftitle={},
    pdfauthor={},
    pdfsubject={},
    pdfkeywords={},
    pdfborder=0 0 0,
    pdfpagemode=UseNone,
    colorlinks=true,
    linkcolor=mydarkblue,
    citecolor=mydarkblue,
    filecolor=mydarkblue,
    urlcolor=mydarkblue,
    pdfview=FitH
}

\usepackage{tcolorbox}
\definecolor{linen}{RGB}{250,240,230}
\usepackage{enumitem}
\usepackage{fontawesome5}
\usepackage{caption}

\crefname{appsec}{appendix}{appendices}
\Crefname{appsec}{Appendix}{Appendices}

\usepackage{appendix}
\usepackage{array}
\newcolumntype{x}[1]{>{\centering\arraybackslash}p{#1pt}}
\newcolumntype{y}[1]{>{\raggedright\arraybackslash}p{#1pt}}

\usepackage{csquotes}
\usepackage[%
minnames=1,maxnames=99,maxcitenames=2,
style=alphabetic,
doi=false,
url=false,
giveninits=true,
hyperref,
natbib,
backend=bibtex,
sorting=nyt,
backref=true
]{biblatex}%
\renewbibmacro{in:}{%
  \ifentrytype{article}{}{\printtext{\bibstring{in}\intitlepunct}}}

\renewbibmacro*{journal}{%
  \iffieldundef{journaltitle}
    {}
    {\printtext[journaltitle]{%
       \printfield[noformat]{journaltitle}%
       \setunit{\subtitlepunct}%
       \printfield[noformat]{journalsubtitle}}}}

\DeclareFieldFormat{sentencecase}{\MakeSentenceCase*{#1}}

\renewbibmacro*{title}{%
  \ifthenelse{\iffieldundef{title}\AND\iffieldundef{subtitle}}
    {}
    {\ifthenelse{\ifentrytype{article}\OR\ifentrytype{inbook}%
      \OR\ifentrytype{incollection}\OR\ifentrytype{inproceedings}%
      \OR\ifentrytype{inreference}\OR\ifentrytype{misc}}
      {\printtext[title]{%
        \printfield[sentencecase]{title}%
        \setunit{\subtitlepunct}%
        \printfield[sentencecase]{subtitle}}}%
      {\printtext[title]{%
        \printfield[titlecase]{title}%
        \setunit{\subtitlepunct}%
        \printfield[titlecase]{subtitle}}}%
     \newunit}%
  \printfield{titleaddon}}

\AtEveryBibitem{%
\ifentrytype{article}{
    \clearfield{urldate}%
    \clearfield{eprint}
    \clearfield{eid}
}{}
\ifentrytype{book}{
    \clearfield{url}%
    \clearfield{urldate}%
    \clearfield{eprint}
}{}
\ifentrytype{collection}{
    \clearfield{url}%
    \clearfield{urldate}%
    \clearfield{eprint}
}{}
\ifentrytype{incollection}{
    \clearfield{url}%
    \clearfield{urldate}%
    \clearfield{eprint}
}{}
}

\AtEveryBibitem{
    \clearfield{pages}
    \clearfield{review}%
    \clearfield{series}
    \clearfield{volume}
    \clearfield{month}
    \clearfield{isbn}
    \clearfield{issn}
    \clearlist{location}
    \clearfield{series}
    \clearlist{publisher}
    \clearname{editor}
}{}

\crefformat{figure}{Figure~#2#1#3}

\usepackage{enumitem} 
\usepackage[separate-uncertainty=true,multi-part-units=single]{siunitx} 

\declaretheoremstyle[
spacebelow=\parsep,
    spaceabove=\parsep,
  mdframed={
    backgroundcolor=gray!10!white,     
    hidealllines=true, 
    innertopmargin=8pt, 
    innerbottommargin=4pt, 
    skipabove=8pt,
    skipbelow=10pt,
    nobreak=true
}
]{grayboxed}

\crefname{gassumption}{Assumption}{Assumptions}

\usepackage{xcolor}

\definecolor{WowColor}{rgb}{.75,0,.75}
\definecolor{SubtleColor}{rgb}{0,0,.50}

\newcounter{margincounter}

\usepackage[affil-it]{authblk}

\usepackage{makecell}

\usepackage{multirow}
\usepackage{tikz}
\usetikzlibrary{positioning}
\usetikzlibrary{arrows, automata}
\usetikzlibrary{patterns}

\def\showauthornotes{1}
\ifnum\showauthornotes=1
\newcommand{\Authornote}[2]{{\sf\small\color{blue}{[#1: #2]}}}
\else
\newcommand{\Authornote}[2]{}
\fi

\title{A Unified Rate-Distortion Perspective\\on Vector, Product, and Scalar Quantization}

\author{
  {\bfseries\upshape
  Xianghong Fang$^{1}$
  \qquad
  Wenlong Mou$^{1}$
  \qquad
  Yuan Yuan$^{2}$
  }
  \\ \vspace{-5pt}
  {\bfseries\upshape
  Dehan Kong$^{1}$
  \qquad
  Tim G. J. Rudner$^{1,3}$
  } \\ \vspace{-0pt}
  {\upshape
  \vspace*{10pt}
  $^1$University of Toronto \quad
  $^2$Boston College\quad
  $^3$Vijil
  } \\
  {\upshape
  \vspace*{15pt}
    \begin{center}
    \href{https://vq-research.github.io/Rate-Distortion-Perspective/}{\faGlobe\enspace Website}
    \quad
    \href{https://github.com/VQ-Research/Rate-Distortion-Perspective}{\faGithub\enspace Code \& Models}
    \end{center}
  }
}
\date{}

\begin{document}

\maketitle

\begin{abstract}
\noindent

Discrete visual tokenization, predominantly driven by vector, scalar, and product quantization, lacks a unified conceptual framework for understanding quantization tradeoffs. In this paper, we propose a unified rate--distortion perspective on modern discrete visual tokenization. By viewing quantization as lossy compression, we characterize the nominal fixed-length coding rate through token count and codebook size, and quantization error as the distortion. Within this framework, we resolve three central questions. First, we theoretically and empirically show that minimizing distortion, rather than maximizing codebook utilization, is the primary intrinsic objective for reconstruction fidelity, with a direct connection to the STE-induced gradient discrepancy. Second, we establish two critical fairness conditions for intrinsic quantization comparison: controlling latent feature statistics and enforcing identical coding rates. Third, under these conditions, we recover the VQ--PQ--SQ distortion hierarchy in modern visual tokenization and show empirically that modern VQ methods achieve the lowest distortion. This work provides a foundational rate--distortion reframing of modern discrete visual tokenization, resolves ambiguities in quantizer evaluation, and provides a controlled framework for isolating intrinsic quantization effectiveness under fixed-rate constraints.
\end{abstract}

\vspace*{-3pt} 
\section{Introduction} 
\label{sec:introduction} 
\vspace*{-3pt}

Discrete visual tokenization has achieved remarkable success in recent years, driven by advances in vector quantization \citep[VQ;][]{Esser2020TamingTF,Sun2024AutoregressiveMB,Tian2024VisualAM}, product quantization \citep[PQ;][]{Jgou2011ProductQF,Guo2024AddressingIC,Li2024ImageFolderAI}, and scalar quantization \citep[SQ;][]{Mentzer2023FiniteSQ,yu2024language,zhao2024image,Han2024InfinitySB}. A major line of VQ research has focused on improving codebook utilization through improved codebook updates, optimization, or parameterizations \citep{Lee2022AutoregressiveIG,Zheng2023OnlineCC,Zhu2024ScalingTC,Zhu2024AddressingRC,Shi2024TamingScalable,Chang2025FullVQ,Lu2026BeyondStationarity}. Despite this rapid progress, the conceptual relationship between utilization, distortion, and quantization effectiveness remains comparatively less understood. In particular, high or even complete codebook utilization does not necessarily imply faithful representation of the input. This motivates a more fundamental framework for understanding the objectives and tradeoffs underlying different quantization algorithms. 

In this paper, we propose a unified rate-distortion perspective on quantization to systematically address these considerations.
Rate-distortion theory provides the classical framework for characterizing lossy compression, and has also motivated learned image compression methods that jointly optimize coding rate and reconstruction distortion \citep{Balle2017EndToEnd,Lei2024ApproachingRD,Zhang2024OptimalLattice,Jiang2026RDVQ}.
We use rate-distortion theory as an analytical framework for comparing discrete visual quantizers by defining distortion as quantization error and rate as the nominal fixed-length coding budget determined by token count and codebook size.
Under this perspective, we resolve three central open questions in discrete visual tokenization, which are summarized in \Cref{fig:introduction}.

First, we investigate the primary optimization objective for quantization algorithms in modern visual tokenization. From a rate-distortion perspective, minimizing distortion rather than directly maximizing codebook utilization is the key to achieving training stability and high-fidelity reconstruction. We provide both theoretical and empirical evidence supporting distortion minimization as the more fundamental optimization principle. Specifically, we prove that minimizing distortion necessarily implies full codebook utilization under mild conditions, whereas the converse does not hold. We further establish a direct connection between distortion and the gradient discrepancy induced by the straight-through estimator(STE)~\citep{Bengio2013EstimatingOP}. Empirically, we demonstrate substantially stronger correlations between distortion and reconstruction fidelity (measured by rFID) than those observed with codebook utilization. These results motivate a distortion-centric perspective on VQ optimization beyond codebook utilization, as summarized in \Cref{fig:introduction}(a), complementing prior analyses of STE bias~\citep{Huh2023StraighteningOT,Fifty2024RestructuringVQ} and \mbox{recent efforts to address codebook collapse~\citep{Zhu2024AddressingRC,Shi2024TamingScalable,Chang2025FullVQ,Lu2026BeyondStationarity}.}

\begin{figure}[t]
    \vspace*{-6pt}
    \centering
    \includegraphics[width=0.97\linewidth]{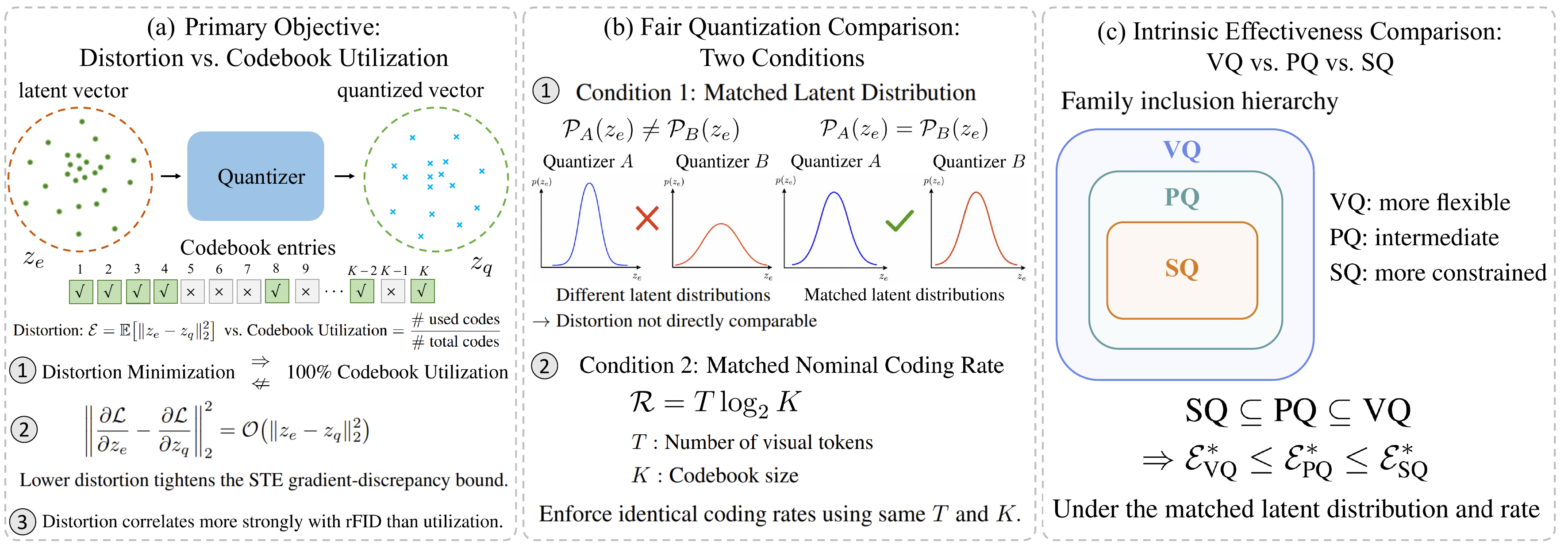}
    \vspace{-8pt}
    \caption{\small{\textbf{Overview of our unified rate-distortion perspective on quantization.}
    (a) \textbf{Primary objective.} Distortion is a more fundamental criterion than codebook utilization, with direct links to utilization, STE-induced gradient discrepancy, and reconstruction fidelity.
    (b) \textbf{Fair comparison.} Intrinsic comparison requires matched latent feature distributions and the same nominal fixed-length rate.
    (c) \textbf{Intrinsic effectiveness.} VQ, PQ, and SQ satisfy $\text{SQ}\subseteq\text{PQ}\subseteq\text{VQ}$, implying $\mathcal{E}^*_{\text{VQ}}\leq\mathcal{E}^*_{\text{PQ}}\leq\mathcal{E}^*_{\text{SQ}}$ under matched latent distributions and fixed rate.
    }}
    \label{fig:introduction}
    \vspace{-2ex}
\end{figure}

Second, we establish two fundamental conditions for fairly comparing the intrinsic effectiveness of quantization algorithms, as illustrated in \Cref{fig:introduction}(b). First, the latent feature distribution must be controlled across compared algorithms. In particular, under global rescaling, the minimum achievable squared quantization distortion changes proportionally with the latent variance, so differences in feature statistics can confound comparisons of quantizer effectiveness. Second, all algorithms must operate at the same nominal fixed-length coding rate, $R=T\log_2K$, which is jointly determined by the number of visual tokens $T$ and the cardinality of the discrete code space $K$. Only under matched latent distributions and coding rates can the intrinsic effectiveness
of different quantization algorithms be meaningfully compared.

Third, we examine the intrinsic effectiveness of quantization algorithms under rigorously controlled conditions. As illustrated in \Cref{fig:introduction}(c), SQ, PQ, and VQ form a nested hierarchy in which VQ generalizes PQ and PQ generalizes SQ, implying greater modeling flexibility and no higher optimal distortion for VQ. Related work on neural image compression has similarly shown that vector or lattice quantization can achieve lower distortion than scalar quantization when when exploiting cross-dimensional dependencies \citep{Lei2024ApproachingRD,Zhang2024OptimalLattice}. However, early visual-tokenizer studies reported severe codebook collapse for VQ, particularly with large codebooks \citep{Dhariwal2020JukeboxAG,Takida2022SQVAEVB,Yu2021VectorquantizedIM,Lee2022AutoregressiveIG,Zheng2023OnlineCC}, leading to inferior empirical performance relative to SQ and PQ in prior works \citep{Mentzer2023FiniteSQ,yu2024language,zhao2024image,Guo2024AddressingIC}. To resolve this apparent conflict, we examine whether the classical distortion hierarchy among VQ, PQ, and SQ can be realized in modern visual tokenization under controlled fixed-rate conditions. Beyond this nested ordering, we show that VQ can exploit low-dimensional source structure to achieve strictly better distortion scaling than PQ and SQ. Our empirical study further shows that advanced VQ algorithms \citep{Fang2026distributional,Fang2026VQTransplant} consistently achieve lower distortion and better reconstruction under the two fairness conditions above.

Our key contributions are as follows:
\vspace*{-3pt} \begin{enumerate}[topsep=0pt, align=left, leftmargin=15pt, labelindent=1pt, listparindent=\parindent, labelwidth=0pt, itemindent=!, itemsep=3pt, parsep=0pt] 
\item \textbf{We establish distortion minimization as a fundamental intrinsic optimization principle for modern visual quantization.} We show that distortion minimization implies full codebook utilization under mild conditions, whereas the converse does not hold. We further connect quantization distortion to the gradient discrepancy induced by the straight-through estimator..
\item \textbf{We identify two fundamental conditions for fair intrinsic comparison of quantization algorithms.} We show that comparisons must control both the scale of latent features and the coding rate, since differences in either factor can obscure the intrinsic effectiveness of the quantizer itself.
\item We characterize the VQ--PQ--SQ distortion hierarchy in modern visual tokenization and show that \textbf{VQ can exploit low-dimensional source structure to achieve strictly better distortion scaling than PQ and SQ.} Empirically, modern VQ methods achieve the lowest distortion and best reconstruction after decoder adaptation under controlled comparisons.
\end{enumerate} 
\vspace*{-1.5pt}
\section{Background}
\label{sec:background}
\vspace*{-1.5pt}

\subsection{Discrete Visual Tokenization} \label{sec:discrete visual tokenizer} 

Discrete visual tokenizers map images into sequences of discrete symbols that can subsequently be modeled by generative models~\citep{Oord2017NeuralDR,Esser2020TamingTF,Sun2024AutoregressiveMB, Tian2024VisualAM}. A typical discrete tokenizer consists of an encoder $\mathcal{E}_{\theta}$, a quantization module $\mathcal{Q}_{\phi}$, and a decoder $\mathcal{D}_{\varphi}$. Given an image $\bm{x}\in\mathbb{R}^{H\times W\times 3}$, the encoder produces continuous latent features 
\begin{align} 
\bm{z}_e = \mathcal{E}_{\theta}(\bm{x}) \in \mathbb{R}^{h\times w\times d}, \qquad h=H/f,\;\;w=W/f, \nonumber 
\end{align} 
where $f$ denotes the spatial downsampling factor and $d$ the latent dimension. 

At each spatial position $(i,j)$, the quantizer maps the continuous feature $\bm{z}^{ij}_e\in\mathbb{R}^{d}$ to a discrete representation,
\begin{align} r^{ij} = \mathcal{I}_{\phi}(\bm{z}^{ij}_e), \qquad \bm{z}^{ij}_q = \mathcal{Q}_{\phi}(\bm{z}^{ij}_e), \nonumber 
\end{align}
where $r^{ij}$ denotes the discrete symbol and $\bm{z}^{ij}_q$ the quantized representation. The resulting discrete symbols can be used as visual tokens for generative modeling, while the quantized latent is decoded as $\hat{\bm{x}} = \mathcal{D}_{\varphi}(\bm{z}_q)$. Different quantization algorithms impose different structures on the mapping from $\bm{z}_e$ to $\bm{z}_q$. In this work, we focus on three widely used families: vector quantization (VQ), product quantization (PQ), and scalar quantization (SQ), which we introduce below.

\subsection{Vector Quantization} 
\label{sec:vq} 
Vector quantization (VQ) discretizes an entire $d$-dimensional feature vector jointly~\citep{Oord2017NeuralDR}. Given a learnable codebook $\phi = \{\bm{e}_k\}_{k=1}^{K} \subset\mathbb{R}^{d},$ containing $K$ code vectors, VQ assigns each continuous feature $\bm{z}^{ij}_e$ to its nearest codebook entry: 
\vspace*{-3pt} 
\begin{align} r^{ij} = \argmin_{k\in\{1,\ldots,K\}} \|\bm{z}^{ij}_e-\bm{e}_k\|_2^2, \qquad \bm{z}^{ij}_q = \bm{e}_{r^{ij}}. \nonumber 
\end{align} 
Thus, each spatial feature is represented by one of the $K$ $d$-dimensional code vectors. 

A longstanding optimization challenge in VQ is \emph{codebook collapse}, where only a subset of the available code vectors is frequently selected~\citep{Dhariwal2020JukeboxAG,Takida2022SQVAEVB, Yu2021VectorquantizedIM,Lee2022AutoregressiveIG,Zheng2023OnlineCC}. The problem can become particularly pronounced for large codebooks~\citep{Zheng2023OnlineCC,Mentzer2023FiniteSQ}. 
A substantial body of work has therefore developed improved codebook updates, optimization procedures, and parameterizations to increase codebook usage~\citep{Razavi2019GeneratingDH,Williams2020HierarchicalQA,Zhang2023RegularizedVQ,Zhu2024ScalingTC,Zhu2024AddressingRC,Shi2024TamingScalable,Chang2025FullVQ,Lu2026BeyondStationarity}. Other work has studied difficulties caused by the non-differentiability of quantization, including the straight-through estimator and alternative gradient transformations~\citep{Huh2023StraighteningOT,Fifty2024RestructuringVQ}.

\subsection{Product Quantization} 
\label{sec:pq} 
Product quantization (PQ) factorizes a $d$-dimensional feature vector into $M$ lower-dimensional subvectors and quantizes each subvector independently ~\citep{Jgou2011ProductQF,Guo2024AddressingIC,Li2024ImageFolderAI}. Specifically,\vspace*{-5pt}
\begin{align} 
\bm{z}^{ij}_e = \bigoplus_{m=1}^{M} \bm{z}^{ij}_m, \qquad \bm{z}^{ij}_m\in\mathbb{R}^{d_m}, \qquad \sum_{m=1}^{M}d_m=d, \nonumber 
\end{align}\\[-5pt] 
where $\bigoplus$ denotes channel-wise concatenation. Each subvector is quantized using an independent subcodebook $\phi_m = \{\bm{e}_{m,k}\}_{k=1}^{n_m} \subset\mathbb{R}^{d_m}$ such that 
\begin{align} 
r^{ij}_m = \argmin_{k\in\{1,\ldots,n_m\}} \|\bm{z}^{ij}_m-\bm{e}_{m,k}\|_2^2, \qquad \hat{\bm{z}}^{ij}_m = \bm{e}_{m,r^{ij}_m}. \nonumber 
\end{align}\\[-5pt]
The full quantized feature is then $\bm{z}^{ij}_q = \bigoplus_{m=1}^{M} \hat{\bm{z}}^{ij}_m$. The $M$ subcodebooks jointly induce $K = \prod_{m=1}^{M}n_m$ possible composite codewords. Hence, PQ can represent a combinatorial discrete space using $\sum_{m=1}^{M}n_m$ explicitly stored subcodebook entries. For later analysis, we use $K$ to denote this \emph{composite code-space cardinality}, rather than the number of explicitly stored vectors.

\subsection{Scalar Quantization}
\label{sec:sq} 
Scalar quantization (SQ) independently discretizes scalar components of a latent representation and can be viewed as the maximally factorized case of PQ with $M=d$. Writing\vspace*{-5pt}
\begin{align} 
\bm{z}^{ij}_e = \bigoplus_{m=1}^{d}z^{ij}_m, \qquad z^{ij}_m\in\mathbb{R}, \nonumber 
\end{align}\\[-5pt]
each scalar component is quantized independently using a scalar codebook $\phi_m = \{e_{m,k}\}_{k=1}^{n_m} \subset \mathbb{R}$. The corresponding discrete index and quantized value are
\begin{align}
r^{ij}_m = \argmin_{k\in\{1,\ldots,n_m\}} |z^{ij}_m-e_{m,k}|^2, \qquad \hat{z}^{ij}_m = e_{m,r^{ij}_m}. \nonumber
\end{align}\\[-5pt]
The resulting quantized vector is $\bm{z}^{ij}_q = \bigoplus_{m=1}^{d}\hat{z}^{ij}_m$, with composite code-space cardinality \mbox{$K = \prod_{m=1}^{d} n_m.$}
Modern visual tokenizers instantiate this general scalar factorization in different ways. FSQ uses a small number of finite scalar levels per latent dimension~\citep{Mentzer2023FiniteSQ}. LFQ uses binary values for each quantized coordinate~\citep{yu2024language}, while BSQ combines binary quantization with hyperspherical normalization~\citep{zhao2024image}.

\section{Toward a Controlled Comparison of Quantization Algorithms}
\label{sec:controlled comparison}

Comparing quantization algorithms across independently trained visual tokenizers does not isolate the contribution of the quantizer itself. End-to-end tokenizer performance is jointly affected by the quantization module and the surrounding training system, including encoder--decoder capacity and architecture, discriminator design, training data, optimization schedule, and computational budget. These factors can vary substantially across existing tokenizer systems, while training each alternative quantizer from scratch under an otherwise identical setup is often computationally expensive. Consequently, empirical comparisons frequently rely on separately developed tokenizer systems whose experimental conditions are not fully matched. 

As illustrated in \Cref{table:unfair factors}, representative tokenizers differ substantially along these dimensions ~\citep{Esser2020TamingTF,Lee2022AutoregressiveIG,Zhu2024ScalingTC, Sun2024AutoregressiveMB,Tian2024VisualAM,Li2024ImageFolderAI, Ma2025UniTokAU}. As a result, differences in reconstruction performance may reflect not only the quantization algorithm, but also changes in model capacity, training objectives, data, or optimization resources. Such comparisons remain informative for evaluating complete tokenizer systems, but are insufficient for isolating the \emph{intrinsic effectiveness} of VQ, PQ, and SQ.

\begin{table*}[!t]
\centering
\vspace{-7pt}
\caption{\small{Representative implementation differences among discrete visual tokenizers. Differences in model design, training setup, and computational budget can confound direct comparisons of quantization algorithms. `-' denotes unreported information, and ``Para'' denotes the number of encoder--decoder parameters.}}
\vspace{-8pt}
\resizebox{\textwidth}{!}{
\begin{tabular}{l|ccccccc}
\toprule
Tokenizers & Para & Architectures & Discriminator & Training Datasets & Training Epochs & Training GPU Hours \\
\midrule
VQGAN~\citep{Esser2020TamingTF} & 68M & CNN U-Net & PatchGAN & ImageNet-1k & - & - \\
RQVAE~\citep{Lee2022AutoregressiveIG} & 95M & CNN U-Net & PatchGAN & ImageNet-1k & 50 & - \\
VQGAN-LC~\citep{Zhu2024ScalingTC} & 68M & CNN U-Net & PatchGAN & ImageNet-1k & 20 & 32 $\times$ V100 - Hours \\
Llama GEN~\citep{Sun2024AutoregressiveMB} & 68M & CNN U-Net & PatchGAN & ImageNet-1k & 40 & 2 $\times$ A100  200 Hours\\
VAR~\citep{Tian2024VisualAM} & 104M & CNN U-Net & StyleGAN & OpenImages & 16 & 16 $\times$ A100 60 Hours\\
ImageFolder~\citep{Li2024ImageFolderAI}  & - & Transformer SEED & StyleGAN & ImageNet-1k & 200 & 32 $\times$ A100 40 Hours \\
UniTok~\citep{Ma2025UniTokAU} & - & Transformer SEED & StyleGAN & OpenImages & - & 256 $\times$ A100 50 Hours \\
\bottomrule
\end{tabular}}
\vspace{-10pt}
\label{table:unfair factors}
\end{table*}

These confounders motivate three fundamental questions about intrinsic quantization effectiveness: \textbf{(Q1)} What should a quantization algorithm fundamentally optimize? \textbf{(Q2)} What conditions are required for a fair intrinsic comparison of different quantization algorithms? \textbf{(Q3)} Under these controlled conditions, how do VQ, PQ, and SQ compare in achievable distortion? In the next section, we develop a unified rate-distortion perspective to answer these three
questions. 

\newcommand{\quantizer}[1]{\mathcal{Q}_{#1}}
\newcommand{\optimalErr}{\mathcal{E}^*}
\newcommand{\optimalErrVQ}{\mathcal{E}^*_{\text{VQ}}}
\newcommand{\optimalErrPQ}{\mathcal{E}^*_{\text{PQ}}}
\newcommand{\optimalErrSQ}{\mathcal{E}^*_{\text{SQ}}}

\section{A Rate-distortion Perspective}
We now develop a unified rate--distortion framework to address the three questions posed in \Cref{sec:controlled comparison}. We first formalize the rate and distortion of discrete quantization, and then study its fundamental optimization objective, the conditions required for fair intrinsic comparison, and the achievable distortion of VQ, PQ, and SQ under controlled conditions.

\subsection{Fixed-Length Rate and Quantization Distortion}
\label{sec:rate_distortion_definition}

We formulate discrete quantization as a fixed-rate lossy compression problem. Unlike entropy-coded neural compression, where the rate is determined by the expected bitstream length under a learned entropy model~\citep{Balle2017EndToEnd,Jiang2026RDVQ}, our goal is to compare the intrinsic effectiveness of different quantization structures under a controlled representation budget. We therefore characterize the representational budget using the nominal fixed-length coding rate and measure the information lost through quantization using squared-error distortion.

\paragraph{Nominal fixed-length rate.}
Consider a tokenizer that produces $T$ discrete tokens, each taking values in a discrete space of cardinality $K$. Its nominal fixed-length coding rate is
\begin{align}
    R = T\log_2 K \quad \text{bits}.
    \label{eq:nominal_rate}
\end{align}
Equivalently, each token carries a nominal coding budget of $\log_2 K$ bits. 
This quantity measures the representational budget available to the quantizer under fixed-length coding and should be distinguished from the expected entropy-coded rate, which additionally depends on the probability distribution of the discrete symbols.

For VQ, $K$ is the number of available code vectors. For PQ and SQ, $K$ denotes the cardinality of the \emph{composite} discrete code space rather than the number of explicitly stored code vectors. Specifically, if the representation is factorized into $M$ independently quantized components with sub-codebook sizes $\{n_m\}_{m=1}^M$, then $K = \prod_{m=1}^{M} n_m$. Thus, VQ, PQ, and SQ can be compared under the same nominal coding \mbox{rate even though they impose different structural constraints on the discrete representation.}

\paragraph{Quantization distortion.}
Let $X\sim P$ denote a latent feature vector in $\mathbb{R}^d$, and let $\mathcal{Q}_{\phi}$ denote a quantizer that maps $X$ to a quantized representation $\mathcal{Q}_{\phi}(X)$. We measure quantization distortion using the expected squared Euclidean error
\begin{align}
    \mathcal{E}(P,\mathcal{Q}_{\phi})
    =
    \mathbb{E}_{X\sim P}
    \left[
        \left\|X-\mathcal{Q}_{\phi}(X)\right\|_2^2
    \right].
    \label{eq:quantization_distortion}
\end{align}
In empirical evaluations, this expectation is approximated by averaging the squared quantization error over the observed latent features.
This distortion directly measures the information lost by the quantization operation and serves as the central quantity in our subsequent analysis.

\begin{remark}[Rate allocation]
\label{remark:rate-allocation}
Under the nominal fixed-length rate $R=T\log_2 K$, doubling the token count is equivalent to squaring the code-space cardinality in terms of representation rate:
\begin{align}
    2T\log_2 K
    =
    T\log_2 K^2.
    \label{eq:rate_equivalence}
\end{align}
\end{remark}
This equivalence concerns the nominal coding budget only; equal rates do not in general imply identical distortion, since different allocations of the same rate across token count and code-space cardinality may induce different representation structures. In our controlled comparisons, we therefore match both the token count and the composite code-space cardinality whenever possible, providing a stronger control than matching $R$ alone.

\subsection{Distortion Minimization: The Primary Optimization Objective} 
\label{sec:information loss minimization} 
 
A substantial line of work on VQ has focused on mitigating codebook collapse and improving codebook utilization~\citep{Zhu2024ScalingTC,Dhariwal2020JukeboxAG,Williams2020HierarchicalQA,Zheng2023OnlineCC,Zhang2023RegularizedVQ,Ramesh2021ZeroShotTG}. In this work, we contend that minimizing distortion constitutes a more fundamental optimization objective. Intuitively, lower distortion promotes more stable and faithful representations in lossy compression systems~\citep{Touchette1999InformationtheoreticLO,Tomar2017InvarianceFE,Touchette2001InformationtheoreticAT}.  

We first show that distortion controls the local gradient discrepancy induced
by the straight-through estimator (STE). As derived in
Appendix~\ref{appendix:gradient gap distortion}, under standard local
smoothness, the squared gradient discrepancy $\mathcal{G}$ satisfies
\begin{align}
    \mathcal{G}
    =
    \mathcal{O}\!\left(\|z_e-z_q\|_2^2\right).
\end{align}
Since $\|z_e-z_q\|_2^2$ is exactly the squared-error distortion, this establishes a local connection between quantization distortion and STE gradient fidelity:
as the distortion approaches zero, the gradient discrepancy also vanishes. Thus, reducing distortion makes the gradient evaluated at the quantized representation locally closer to that evaluated at the continuous
representation, providing a mechanism through which lower distortion can contribute to more stable optimization.
 
We further formalize the relationship between distortion minimization and codebook utilization. Let $X$ be a random variable supported on $\mathcal{X}\subseteq\mathbb{R}^d$. A deterministic quantizer with codebook size $K\in\mathbb{N}_+$ is a mapping $f:\mathcal{X}\to\{1,\dots,K\}$. Define
\begin{align} 
\mathcal{E}(f) 
&\triangleq 
\min_{\hat{x}:\{1,\dots,K\}\to\mathbb{R}^d} 
\mathbb{E}\!\left[ 
\|X-\hat{x}(f(X))\|_2^2 
\right],  \quad  
\mathrm{U}(f) \triangleq 
\frac{ 
\bigl| 
\{k:\mathbb{P}(f(X)=k)>0\} 
\bigr| 
}{K}. 
\nonumber 
\end{align} 

\begin{assumption}[Non-degenerate splittability]
\label{assumption:splittability}
For any measurable set $A \subseteq \mathcal{X}$ with
$\mathbb{P}(X\in A)>0$ and nonzero conditional variance,
there exists a partition of $A$ into two measurable subsets
$A=A_1\dot{\cup}A_2$, each with positive probability, such that
\[
\mathbb{E}[X\mid X\in A_1]
\neq
\mathbb{E}[X\mid X\in A_2].
\]
\end{assumption}
This mild condition is satisfied by standard continuous distributions with densities. The following proposition further excludes the trivial setting in which the support of the distribution can already be represented exactly using at most $K$ codewords, since in that case zero distortion may be achieved without utilizing the entire codebook. For example, for an empirical distribution supported on $n$ distinct samples, $K<n$ is sufficient to rule out such a zero-distortion solution.
 
\begin{proposition} 
\label{theorem:distortion-vs-codebook-utilization} 
(a) Let\vspace*{-9pt}
\begin{align}
f^\star 
\in 
\argmin_{f:\mathcal{X}\to\{1,\dots,K\}} 
\mathcal{E}(f).    
\end{align}\\[-8pt]
Under Assumption~\ref{assumption:splittability}, if no zero-distortion quantizer with at most $K$ codewords exists, then every global distortion minimizer satisfies
$ 
\mathrm{U}(f^\star)=1. 
$ 
 
(b) There exist quantizers 
$g:\mathcal{X}\to\{1,\dots,K\}$ 
with $ 
\mathrm{U}(g)=1 $ 
that are not distortion minimizers:\vspace*{-3pt}
\begin{align}
\mathcal{E}(g) 
> 
\min_{f:\mathcal{X}\to\{1,\dots,K\}} 
\mathcal{E}(f). \nonumber
\end{align}\\[-28pt]
\end{proposition}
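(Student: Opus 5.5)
For part (a) we argue by contradiction with a splitting argument. Suppose $f^\star$ is a global distortion minimizer with $\mathrm{U}(f^\star)<1$, so some index $k_0$ has $\mathbb{P}(f^\star(X)=k_0)=0$. For a fixed assignment $f$, the inner minimization over $\hat{x}$ is solved by cell centroids, $\hat{x}(k)=\mathbb{E}[X\mid f(X)=k]$ on cells of positive probability. Hence
\[
\mathcal{E}(f)=\sum_{k:\,p_k>0} p_k\,\mathrm{tr}\,\mathrm{Cov}(X\mid f(X)=k),
\]
where $p_k=\mathbb{P}(f(X)=k)$. (We assume $\mathbb{E}\|X\|_2^2<\infty$, as is implicit in the finiteness of $\mathcal{E}$.)

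Since no zero-distortion quantizer with at most $K$ codewords exists, $\mathcal{E}(f^\star)>0$. Therefore some used cell $A=\{f^\star(X)=k_1\}$ has positive probability and nonzero conditional variance. Assumption~\ref{assumption:splittability} then gives a partition $A=A_1\dot\cup A_2$ in which both parts have positive probability and their conditional means $\mu_1\neq\mu_2$.

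We build a new quantizer $g$ that agrees with $f^\star$ outside $A$, sends $A_1$ to $k_1$, and sends $A_2$ to the unused index $k_0$. The quantity to compute is the decomposition of the within-cell error on $A$. Writing $q_i=\mathbb{P}(X\in A_i)$, $q=q_1+q_2$ and $\mu=(q_1\mu_1+q_2\mu_2)/q$, the law of total variance gives
\[
q\,\mathrm{tr}\,\mathrm{Cov}(X\mid A)=\sum_{i=1,2} q_i\,\mathrm{tr}\,\mathrm{Cov}(X\mid A_i)+\sum_{i=1,2} q_i\|\mu_i-\mu\|_2^2 .
\]
The between-cell term equals $\frac{q_1q_2}{q}\|\mu_1-\mu_2\|_2^2>0$. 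Consequently $\mathcal{E}(g)<\mathcal{E}(f^\star)$, contradicting global optimality, and so $\mathrm{U}(f^\star)=1$.

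The step needing most care is the existence of the minimizer. This is not at issue here, because the statement quantifies over a given $f^\star$ in the argmin. The remaining subtleties are measure-theoretic bookkeeping: cells defined only up to null sets, and measurability of $g$, which holds because $A_1,A_2$ are measurable.

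For part (b) we give an explicit example. Take $K=2$ and $X$ uniform on $[0,1]\subset\mathbb{R}$ (or any nondegenerate distribution), so that no zero-distortion 2-codeword quantizer exists. The optimal quantizer thresholds at $1/2$ and attains distortion $2\cdot\frac12\cdot\frac{1}{48}=\frac1{48}$. Let $g$ threshold at $t=1/4$ instead. Both indices are used, so $\mathrm{U}(g)=1$, but
\[
\mathcal{E}(g)=\tfrac14\cdot\tfrac{(1/4)^2}{12}+\tfrac34\cdot\tfrac{(3/4)^2}{12}=\tfrac{7}{192}>\tfrac1{48}.
\]
To justify that the minimum over all $f$ (not only thresholds) is $1/48$, we could avoid the issue altogether: it suffices that $\min_f\mathcal{E}(f)\le 1/48$, and the threshold-$1/2$ quantizer witnesses this, which gives the strict inequality directly.

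Alternatively, one can use a non-interval assignment on an empirical distribution: with points $\{0,1,10,11\}$ and $K=2$, the interleaved assignment $\{0,10\},\{1,11\}$ has full utilization but distortion $50$, versus $1$ for the partition $\{0,1\},\{10,11\}$.
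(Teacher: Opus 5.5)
Your proof is correct and follows the paper's argument. Part (a) is the same contradiction, splitting a positive-variance cell so that one piece goes to the unused index and applying the law of total variance. Part (b) uses the same $K=2$, $X\sim\mathrm{Unif}[0,1]$ construction compared against the threshold-$1/2$ quantizer with distortion $1/48$; the paper thresholds at $9/10$ to get $73/1200$, where you threshold at $1/4$ to get $7/192$. One small slip in your optional alternative example with points $\{0,1,10,11\}$: the interleaved assignment has summed error $100$ (mean $25$) against $1$ (mean $1/4$), not $50$ against $1$, though the conclusion is unaffected.
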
 
Proposition~\ref{theorem:distortion-vs-codebook-utilization} establishes that, under these mild conditions, every global distortion minimizer fully utilizes the codebook, whereas full utilization does not imply distortion optimality. The proof is provided in Appendix~\ref{appendix:proof-distortion-codebook-utilization}.

\subsection{Conditions for Fair Intrinsic Comparison}
\label{sec:fair comparison}

A fair intrinsic comparison of quantization algorithms requires controlling both the latent source distribution and coding rate. First, all quantizers must operate on the same latent distribution, because quantization distortion depends not only on the quantization algorithm but also on the statistical properties of the source distribution. Even under the simplest change in source statistics, namely a global rescaling $X' = aX$, the optimal $K$-point squared-error distortion changes as\vspace*{-4pt}
\begin{align}
    \mathcal{E}^\star(aX,K)
    =
    a^2\mathcal{E}^\star(X,K).
    \label{eq:distortion_scaling}
\end{align}\\[-15pt]
Thus, differences in latent feature statistics can directly confound distortion comparisons. This result is formally established in Proposition~\ref{prop:distortion-scaling} in Appendix~\ref{appendix:error-variance-relationship}.

\begin{condition}[Matched latent distribution]
\label{condition:feature distribution}
All compared quantizers must be evaluated on the same latent distribution $P$.
\end{condition}

Second, distortion must be compared under the same coding budget. For $T$ discrete tokens with code-space cardinality $K$, the nominal fixed-length coding rate is $R = T\log_2 K$. Since both $T$ and $K$ affect representational capacity and reconstruction behavior~\citep{yu2024language,Zhu2024ScalingTC,Yu2024TiTok,Bachmann2025FlexTok}, comparisons at different rates do not isolate intrinsic quantization effectiveness.

\begin{condition}[Matched nominal rate]
\label{condition:rate constraints}
The nominal fixed-length coding rate must be identical across all compared algorithms.
\end{condition}

In our controlled experiments, we adopt the stronger criterion
$T_A=T_B$ and $K_A=K_B$, rather than matching only
$T\log_2K$, to avoid confounding different allocations of the same nominal rate. 

\subsection{Rate--Distortion Analysis of VQ, PQ, and SQ}
\label{sec:quantization error analysis}

In this section, we study the rate--distortion behavior of idealized VQ, PQ, and SQ, and compare their distortion under equal coding rates.

Given a probability distribution $\Prob$ over $\real^\usedim$ and a codebook size $K \in \mathbb{N}_+$, we define the optimal distortions for VQ, PQ, and SQ as follows:
\begin{align*} 
    \optimalErrVQ (\Prob, K) & \mydefn \inf_{\phi} \Big\{ \Exs \big[ \vecnorm{X - \quantizer{\phi}(X)}{2}^2 \big] : \abss{\phi} \leq K \Big\},\\
    \optimalErrPQ \big(\Prob, K, M \big) & \mydefn \inf_{\phi} \Big\{ \Exs \big[ \vecnorm{X - \quantizer{\phi}(X)}{2}^2 \big] : \phi = \bigoplus_{m=1}^M \phi_m,
    \phi_m \subseteq \real^{d_m}, \abss{\phi_m} = n_m, \prod_{m = 1}^{M} n_m \leq K \Big\},
    \\
    \optimalErrSQ \big(\Prob, K \big) & \mydefn \inf_{\phi} \Big\{ \Exs \big[ \vecnorm{X - \quantizer{\phi}(X)}{2}^2 \big] : \phi = \bigoplus_{m=1}^\usedim \phi_m,
    \phi_m \subseteq \real, \abss{\phi_m} = n_m, \prod_{m = 1}^{\usedim} n_m \leq K  \Big\}.
\end{align*}\\[-8pt]
Here $\abss{\phi}$ denotes the size of the codebook $\phi$, and $\bigoplus$ denotes the Cartesian product of sets.

Clearly, SQ is a special case of PQ with $M = d$, and PQ is a special case of VQ, a classical structural relationship in quantization theory. Thus, the optimal distortions satisfy:
\begin{align*}
    \optimalErrVQ (\Prob, K) \leq \optimalErrPQ \big(\Prob, K,  M \big) \leq \optimalErrSQ \big(\Prob, K \big), \; \mbox{for any } 2\leq M \leq d.
\end{align*}
Let $\mathcal{P}$ be the set of all probability distributions over $[- 1, 1]^\usedim$. The following results provide quantitative characterizations of the optimal distortions for VQ, PQ, and SQ.

\begin{proposition}\label{thm:worst-case}
    For any $K \geq 2^\usedim$, we have
    \begin{align*} 
      \frac{d}{8K^{2/d}} \leq \sup_{\Prob \in \mathcal{P}} \optimalErrVQ (\Prob, K) \leq \sup_{\Prob \in \mathcal{P}} \optimalErrSQ \big(\Prob, K \big) \leq \frac{4d}{K^{2/d}}
    \end{align*}
\end{proposition}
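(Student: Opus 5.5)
The middle inequality is the nesting $\text{SQ}\subseteq\text{VQ}$ already noted above: every product codebook is a codebook with at most $K$ points, so $\optimalErrVQ(\Prob,K)\le\optimalErrSQ(\Prob,K)$ for each $\Prob$, and this survives taking $\sup_{\Prob\in\mathcal{P}}$. The two outer bounds will come from two separate arguments: a uniform-grid construction for the upper bound, and a volume (covering) argument for the uniform distribution on the cube for the lower bound.

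\emph{Upper bound.} Let $n=\lfloor K^{1/d}\rfloor$. Since $K\ge 2^d$ we have $K^{1/d}\ge 2$, and hence $n\ge K^{1/d}/2$. Split $[-1,1]$ into $n$ intervals of length $2/n$ and let $\phi_m$ be the set of their midpoints, for every $m$. Then $\prod_m n_m=n^d\le K$, so this is an admissible SQ codebook. For every $x\in[-1,1]^d$ and every coordinate, the quantization error is at most $1/n$. Therefore, for every $\Prob\in\mathcal{P}$,
\[
\optimalErrSQ(\Prob,K)\le \frac{d}{n^2}\le \frac{4d}{K^{2/d}}.
\]

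\emph{Lower bound.} Take $\Prob$ uniform on $[-1,1]^d$, which has density $2^{-d}$, and fix any codebook $\phi$ with $|\phi|\le K$. Write $D(X)=\min_{e\in\phi}\|X-e\|_2^2$. The event $\{D(X)\le t\}$ requires $X$ to lie in a union of $K$ balls of radius $\sqrt t$, so
\[
\mathbb{P}(D(X)\le t)\le 2^{-d}KV_d t^{d/2},\qquad V_d=\frac{\pi^{d/2}}{\Gamma(d/2+1)}.
\]
By the layer-cake formula,
\[
\mathbb{E}D(X)=\int_0^\infty \mathbb{P}(D>t)\,dt\ge\int_0^{t_0}\Bigl(1-(t/t_0)^{d/2}\Bigr)dt=\frac{d}{d+2}\,t_0,
\qquad t_0=\Bigl(\frac{2^d}{KV_d}\Bigr)^{2/d}.
\]
This bound is uniform over $\phi$, so
\[
\sup_{\Prob\in\mathcal{P}}\optimalErrVQ(\Prob,K)\ge \frac{d}{d+2}\cdot\frac{4}{V_d^{2/d}}\cdot K^{-2/d}.
\]

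It therefore remains to show the purely dimensional inequality
\[
\frac{d}{d+2}\cdot\frac{4\,\Gamma(d/2+1)^{2/d}}{\pi}\ge \frac d8 .
\]
I expect this to be the main (though elementary) obstacle, because the constant $1/8$ leaves little slack.
\begin{itemize}
\item For large $d$, use Stirling's lower bound $\Gamma(d/2+1)\ge (d/(2e))^{d/2}\sqrt{\pi d}$. This gives a left-hand side of about $\tfrac{d}{d+2}\cdot\tfrac{2d}{\pi e}(\pi d)^{1/d}$. Since $2/(\pi e)\approx0.234>1/8$, this exceeds $d/8$ once $d\ge 2$, up to checking the factor $d/(d+2)$.
\item For small $d$, check directly. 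For example, $d=1$ gives $V_1=2$, $t_0=K^{-2}$, and the bound $\tfrac13K^{-2}\ge\tfrac18K^{-2}$. I will verify $d=2,3$ by explicit evaluation and use the Stirling estimate beyond that.
\end{itemize}
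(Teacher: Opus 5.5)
Your proof is correct. The middle inequality and the grid upper bound are exactly what the paper does, but your lower bound takes a different route.

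The paper argues cell by cell. For each nearest-neighbor cell $A_k$ of volume $v_k$, it invokes the fact that a centered ball of the same volume minimizes the second moment, giving $\int_{A_k}\|x-e_k\|_2^2\,dx\ge\frac{d}{d+2}V_d^{-2/d}v_k^{1+2/d}$. It then uses convexity of $v\mapsto v^{1+2/d}$ under $\sum_k v_k=2^d$ to show that equal volumes are the worst case.

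You instead bound the small-ball probability $\mathbb{P}(D(X)\le t)$ for all $K$ codewords at once by a union bound, and integrate with the layer-cake formula. Both arguments reach the identical intermediate bound $\frac{4d}{d+2}V_d^{-2/d}K^{-2/d}$. Your computation is essentially the bathtub proof of the ball-minimizes-second-moment fact, carried out globally on a union of $K$ balls, so the per-cell convexity step is absorbed into the union bound.

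What each route buys:
\begin{itemize}
\item Yours is fully self-contained, with no appeal to a rearrangement fact. It also uses only that the density is at most $2^{-d}$, so it immediately gives $\mathbb{E}D(X)\ge\frac{d}{d+2}(\|p\|_\infty K V_d)^{-2/d}$ for any source with bounded density $p$.
\item The paper's cell-wise form makes the link to normalized second moments of cells and Zador-type high-rate theory more visible.
\end{itemize}

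The closing dimensional inequality is the same in both, namely $V_d^{2/d}\le 32/(d+2)$. Your Stirling estimate gives $V_d\le(2\pi e/d)^{d/2}(\pi d)^{-1/2}$, which is slightly sharper than the paper's $V_d\le(2\pi e/d)^{d/2}$.

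Your worry about slack there is unfounded. The bound $\Gamma(x+1)\ge\sqrt{2\pi x}\,(x/e)^x$ holds for every $x>0$, and $\frac{d}{d+2}(\pi d)^{1/d}\ge 1$ for every $d\ge 1$: check $d=1,2$ directly, and for $d\ge 3$ use $\ln(1+2/d)\le 2/d\le \ln(\pi d)/d$. So Stirling alone gives a left-hand side of at least $\frac{2d}{\pi e}\approx 0.234\,d$ for all $d$. You need no separate small-$d$ checks, and you have nearly a factor of two to spare over $d/8$.
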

\noindent See \Cref{appendix:proof-thm-worst-case} for the proof. 
\Cref{thm:worst-case} shows that for worst-case distributions, the optimal distortions of VQ, PQ, and SQ are the same up to universal constant factors. All three methods achieve a distortion that scales as $\Theta(d/K^{2/d})$.

On the other hand, when the data distribution has intrinsic low-dimensional structures, VQ can achieve strictly better distortion scaling than PQ and SQ. We illustrate this phenomenon in the following proposition.
\begin{proposition}\label{thm:adaptive}
    If the support of $\Prob \in \mathcal{P}$ is contained in a $d_\text{eff}$-dimensional subspace of $\real^d$ with $d_\text{eff} < d$, then for any $K \geq 2^{d_\text{eff}}$, we have\vspace*{-8pt}
    \begin{align*}
        \optimalErrVQ (\Prob, K) \leq \frac{4 d d_\text{eff}}{K^{2/d_\text{eff}}}.
    \end{align*}
    On the other hand, there exists a 1-dimensional linear subspace $L \subseteq \real^d$ and a distribution $\Prob$ supported on $L \cap [-1, 1]^d$ such that for any $K \geq 2^d$, we have
    \begin{align*}
        &\optimalErrPQ \big(\Prob, K, M \big) \geq \frac{M}{4} K^{- 2/M}, \quad \mbox{for any } 2 \leq M \leq d,
        \quad
        \mbox{and}
        \quad
        \optimalErrSQ \big(\Prob, K \big) \geq \frac{d}{4} K^{- 2/d}.
    \end{align*}
\end{proposition}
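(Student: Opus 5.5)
The plan is to prove the two halves separately: an explicit grid construction inside the subspace gives the VQ upper bound, and a diagonal line gives the PQ and SQ lower bounds via a one-dimensional reduction and AM--GM.

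\textbf{Upper bound for VQ.} Let $V$ be the $d_\text{eff}$-dimensional subspace containing the support of $\Prob$, and fix an orthonormal basis $u_1,\dots,u_{d_\text{eff}}$ of $V$. Every $x\in V\cap[-1,1]^d$ has $\|x\|_2\le\sqrt d$. Hence its coordinates $\langle x,u_i\rangle$ all lie in $[-\sqrt d,\sqrt d]$.
\begin{itemize}
\item Set $n=\lfloor K^{1/d_\text{eff}}\rfloor$. Since $K\ge 2^{d_\text{eff}}$, we have $n\ge 1$ and $n\ge K^{1/d_\text{eff}}/2$.
\item Split $[-\sqrt d,\sqrt d]$ into $n$ intervals of length $2\sqrt d/n$ and take their midpoints. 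The product grid, mapped back into $V$ through the basis $u_i$, has $n^{d_\text{eff}}\le K$ points.
\item Every point of the support is within $\sqrt d/n$ of the grid in each of the $d_\text{eff}$ orthonormal coordinates. The nearest-neighbour error is therefore at most $d_\text{eff}\, d/n^2\le 4dd_\text{eff}K^{-2/d_\text{eff}}$.
\end{itemize}
Since $\optimalErrVQ$ is an infimum over codebooks, this grid bounds it from above.

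\textbf{Lower bound for PQ and SQ.} Take $L=\mathrm{span}(\mathbf 1)$, with $\mathbf 1=(1,\dots,1)$, and let $\Prob$ be the law of $t\mathbf 1$ with $t\sim\mathrm{Unif}[-1,1]$. This $\Prob$ is supported on $L\cap[-1,1]^d$.

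Consider any product codebook $\phi=\bigoplus_m\phi_m$ with $|\phi_m|=n_m$. Nearest-neighbour search over a Cartesian product decouples across blocks, so the squared error is $\sum_m\min_{e\in\phi_m}\|t\mathbf 1_{d_m}-e\|^2$. For each $e\in\mathbb R^{d_m}$, orthogonal projection onto $\mathbf 1_{d_m}$ gives
\begin{align*}
\|t\mathbf 1_{d_m}-e\|^2\ge d_m\,(t-\bar e)^2,
\end{align*}
where $\bar e$ is the mean of the entries of $e$. So block $m$ contributes at least $d_m$ times the distortion of an $n_m$-point scalar quantizer of $t$.

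\textbf{One-dimensional lemma.} Any $n$-point quantizer of $\mathrm{Unif}[-1,1]$ has distortion at least $1/(3n^2)$. The proof goes as follows.
\begin{itemize}
\item The Voronoi cells are intervals of lengths $\ell_i$ with $\sum_i\ell_i=2$.
\item A cell of length $\ell$ carrying density $1/2$ contributes at least $\ell^3/24$, the minimum being attained by placing the codepoint at the midpoint.
\item By convexity of $\ell\mapsto\ell^3$, the sum $\sum_i\ell_i^3/24$ is minimised at $\ell_i=2/n$, giving $1/(3n^2)$.
\end{itemize}

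\textbf{Combining.} Using $d_m\ge1$ and then AM--GM together with $\prod_m n_m\le K$,
\begin{align*}
\Exs\|X-\quantizer{\phi}(X)\|^2\ \ge\ \sum_{m=1}^M\frac{1}{3n_m^2}\ \ge\ \frac M3\Big(\prod_m n_m\Big)^{-2/M}\ \ge\ \frac M3K^{-2/M}\ \ge\ \frac M4K^{-2/M}.
\end{align*}
The SQ bound is the special case $M=d$, $d_m=1$. The hypothesis $K\ge 2^d$ is not needed for this argument; it only keeps the statement in the regime of \Cref{thm:worst-case}.

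\textbf{Main obstacle.} The step needing the most care is the decoupling claim. It must hold for arbitrary real codebooks, which need not lie on the diagonal, and the projection inequality is what handles those off-diagonal codewords. Everything else is elementary.
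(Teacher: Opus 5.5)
Your proof is correct and follows essentially the same route as the paper. For the VQ upper bound you both use a midpoint grid with $\lfloor K^{1/d_{\mathrm{eff}}}\rfloor$ levels per orthonormal coordinate on $[-\sqrt d,\sqrt d]$, and for the PQ/SQ lower bounds you both use the diagonal source $t\mathbf 1$ with projection onto $\mathbf 1_{d_m}$, the $1/(3n^2)$ uniform scalar lemma, and AM--GM. The only difference is cosmetic: you apply $d_m\ge 1$ before AM--GM and relax $1/3$ to $1/4$ at the end, whereas the paper does these steps in the opposite order.
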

\noindent See \Cref{appendix:proof-thm-adaptive} for the proof. \Cref{thm:adaptive} shows that when the data distribution has intrinsic dimension $d_\text{eff}<d$, VQ can achieve distortion scaling as $\mathcal{O}(K^{-2/d_\text{eff}})$, significantly smaller than the worst-case rate $\Theta(d/K^{2/d})$ when $d_\text{eff}\ll d$. In contrast, for a simple 1-dimensional distribution, SQ still suffers $\Omega(d/K^{2/d})$ while VQ can achieve $\mathcal{O}(dK^{-2})$. PQ interpolates between these extremes, with performance depending on the number of blocks $M$. This separation goes beyond the nested family ordering: VQ can exploit intrinsic low-dimensional structure, whereas fixed coordinate-wise factorization in PQ and SQ may prevent such exploitation. The VQ upper-bound argument also extends to supports with covering-number growth $N(\varepsilon)\lesssim C\varepsilon^{-d_\text{eff}}$, with constants depending on covering regularity; the linear-subspace case above is the cleanest special case.  Appendix~\ref{appendix:classical-quantization-theory} discusses the relationship of these results to classical quantization theory and clarifies the novelty boundary.

\begin{table*}[!h]
\centering
\caption{\small{Quantization and reconstruction performance of VQ, PQ, and SQ on ImageNet-1K under the controlled latent-space setting. $^{\dagger}$: Results cited from VQ-Transplant~\citep{Fang2026VQTransplant}.
Within each quantization type and phase (Substitution/Adaptation), optimal values are \underline{underlined}; overall best results per metric are \textbf{bold}.}}
\vspace*{-5pt}
\label{tab:vq pq sq}
\resizebox{0.93\textwidth}{!}{%
\begin{tabular}{lccccccccccc}
\toprule
Approaches  & Types & Phase & Tokens & $K$ & $\mathcal{E}$($\downarrow$) & $\mathrm{U}$ ($\uparrow$)  & PSNR($\uparrow$) & SSIM($\uparrow$) & LPIPS ($\downarrow$) & r-FID($\downarrow$) & r-IS($\uparrow$) \\
\midrule
Vanilla VQ$^{\dagger}$ & VQ & Substitution & 512 & 65536 & 0.422 & 0.2\% & 22.04 & 53.1 & 0.243 & 10.89 & 103.8\\
EMA VQ$^{\dagger}$ & VQ & Substitution & 512 & 65536 & 0.217 & 65.5\% & 24.94 & 65.9 & 0.127 & 1.78 & 185.8\\
Online VQ$^{\dagger}$ & VQ & Substitution & 512 & 65536 & 0.280 & 13.5\% & 24.42 & 63.2 & 0.147 & 2.28 & 174.2\\
Wasserstein VQ$^{\dagger}$ & VQ & Substitution & 512 & 65536 & \textbf{\underline{0.201}} & 99.6\% & 25.22 & \textbf{\underline{66.9}} & \textbf{\underline{0.121}} & 1.76 & 186.0\\
MMD VQ$^{\dagger}$ & VQ & Substitution & 512 & 65536  & \textbf{\underline{0.201}} & \underline{99.9\%} & \textbf{\underline{25.24}} & 66.8 & \textbf{\underline{0.121}} & \underline{1.69} & \underline{187.3} \\
\midrule
Vanilla VP2 & PQ & Substitution & 512 & 65536 & 0.233 & 59.2\% & 24.80 & 65.3 & 0.130 & 1.84 & 183.1 \\
EMA VP2 & PQ & Substitution & 512 & 65536 & \underline{0.209} & \textbf{\underline{100\%}} & 25.08 & \underline{66.4} & \underline{0.123} & 1.68 &  187.2\\
Online VP2 & PQ & Substitution & 512 & 65536 & 0.211 & \textbf{\underline{100\%}} & \underline{25.09} & 66.3 & 0.124 & 1.79 & 185.7 \\ 
Wasserstein VP2 & PQ & Substitution & 512 & 65536 & 0.217 & \textbf{\underline{100\%}} & 24.79 & 65.8 & 0.128 & 1.78 & 185.7 \\ 
MMD VP2 & PQ & Substitution & 512 & 65536 & 0.212 & \textbf{\underline{100\%}} & 25.00 & 66.1 & \underline{0.123} & \textbf{\underline{1.61}} &  \textbf{\underline{189.5}}\\
\midrule
FSQ & SQ & Substitution & 512 & 65536 & 0.272 & \underline{100\%} & 24.46 & 62.5 & 0.140 & 2.35 & 174.8 \\
LFQ & SQ & Substitution & 512 & 65536 & 0.279 & 29.8\% & 24.06 & 62.6 & 0.146 & 2.15 & 176.2 \\
BSQ & SQ & Substitution & 512 & 65536 & \underline{0.231} & \textbf{\underline{100\%}}  & \underline{24.55} & \underline{65.2} & \underline{0.132} & \underline{1.96} & \underline{182.1} \\
\midrule
\midrule
Vanilla VQ$^{\dagger}$ & VQ & Adaptation & 512 & 65536 & 0.422 & 0.2\% & 21.19 & 50.7 & 0.209  & 5.05 & 118.9\\
EMA VQ$^{\dagger}$ & VQ & Adaptation & 512 & 65536 & 0.217 & 65.5\% & 24.36 & 64.1 & 0.111 & 0.99 & 194.3\\
Online VQ$^{\dagger}$ & VQ & Adaptation & 512 & 65536 & 0.280 & 13.5\% & 23.84 & 61.6 & 0.130 & 1.38 & 182.9 \\
Wasserstein VQ$^{\dagger}$ & VQ & Adaptation & 512 & 65536 & \textbf{\underline{0.201}} & 99.6\% & \textbf{\underline{24.68}} & \textbf{\underline{65.4}} & \textbf{\underline{0.106}} & 0.92 & 195.5 \\
MMD VQ$^{\dagger}$ & VQ & Adaptation & 512 & 65536  & \textbf{\underline{0.201}} & \underline{99.9\%} & 24.65 & 65.0 & \textbf{\underline{0.106}} & \textbf{\underline{0.86}} & \textbf{\underline{197.1}}\\
\midrule
Vanilla VP2 & PQ & Adaptation & 512 & 65536 & 0.233 & 59.2\% & 24.28 & 64.0 & 0.114 & 1.07 & 191.7 \\
EMA VP2 & PQ & Adaptation & 512 & 65536 & \underline{0.209} & \textbf{\underline{100\%}} & \underline{24.55} & \underline{64.9} & \underline{0.107} & \underline{0.93} & 195.4 \\
Online VP2 & PQ & Adaptation & 512 & 65536 & 0.211 & \textbf{\underline{100\%}} & 24.53 & 64.7 & 0.108 & 0.95 & 195.3 \\ 
Wasserstein VP2 & PQ & Adaptation & 512 & 65536 & 0.217 & \textbf{\underline{100\%}} & 24.44 & 64.6 & 0.110 & 0.99 & 193.5 \\ 
MMD VP2 & PQ & Adaptation & 512 & 65536 & 0.212 & \textbf{\underline{100\%}} & 24.43 & 64.5 & 0.109 & 0.95 & \underline{196.1} \\
\midrule
FSQ & SQ & Adaptation & 512 & 65536 & 0.272 & \textbf{\underline{100\%}} & 23.87 & 61.0 & 0.119 & 1.23 & 186.6 \\
LFQ & SQ & Adaptation & 512 & 65536 & 0.279 & 29.8\% & 23.42 & 60.7 & 0.130 & 1.30 & 183.2 \\
BSQ & SQ & Adaptation & 512 & 65536 & \underline{0.231} & \textbf{\underline{100\%}} & \underline{24.06} & \underline{63.6} & \underline{0.117} & \underline{1.07} & \underline{190.8} \\
\bottomrule
\end{tabular}
}
\vspace{-5pt}
\end{table*}

\section{Empirical Evaluation}
\label{sec:experiments}
\vspace*{-4pt}
We empirically compare VQ, PQ, and SQ under matched latent distributions and nominal coding rates, with primary experiments conducted on ImageNet-1K~\citep{Deng2009ImageNetAL}. We further examine rate-distortion behavior across coding budgets and validate the conclusions across datasets and in a controlled pixel-space setting.

\vspace*{-3pt}
\subsection{Experimental Setup}
\label{sec:experimental-setup}
\vspace*{-3pt}

\paragraph{Controlled latent-space comparison.}
For our primary experiments, we use a pretrained VAR tokenizer~\citep{Tian2024VisualAM} within the VQ-Transplant framework~\citep{Fang2026VQTransplant}. During quantization-module substitution, the pretrained encoder and decoder are frozen while the original quantizer is replaced by the method under evaluation. During decoder adaptation, the encoder and transplanted quantizer are frozen while the decoder adapts to the quantized representation. Since all methods share the same pretrained encoder and representation pipeline, they operate on the same latent source distribution, satisfying the matched-distribution condition in \Cref{sec:fair comparison}.

\textbf{Quantization algorithms.}~~~
We compare VQ, PQ, and SQ. For VQ, we evaluate Vanilla VQ~\citep{Oord2017NeuralDR}, EMA VQ~\citep{Razavi2019GeneratingDH}, Online VQ~\citep{Zheng2023OnlineCC}, Wasserstein VQ~\citep{Fang2026distributional}, and MMD VQ~\citep{Fang2026VQTransplant}. For PQ, we use the corresponding Vanilla VP2, EMA VP2, Online VP2, Wasserstein VP2, and MMD VP2 variants. For SQ, we evaluate FSQ~\citep{Mentzer2023FiniteSQ}, LFQ~\citep{yu2024language}, and BSQ~\citep{zhao2024image}. In the primary latent-space comparison, all methods use $T=512$ tokens and composite code-space cardinality $K=65{,}536$, matching both components of the nominal rate $R=T\log_2K$. Full implementation and training details for the latent- and pixel-space experiments are provided in Appendix~\ref{appendix:experimental details}.

\textbf{Evaluation metrics.}~~~
We report quantization distortion $\mathcal{E}$ and codebook utilization $\mathrm{U}$. Reconstruction quality is evaluated using Peak Signal-to-Noise Ratio (PSNR), Structural Similarity Index (SSIM), LPIPS~\citep{Zhang2018TheUE}, reconstruction Fr\'echet Inception Distance (r-FID)~\citep{Heusel2017GANsTB}, and reconstruction Inception Score (r-IS)~\citep{Salimans2016ImprovedTF}.

\vspace*{-3pt}
\subsection{Experimental Results}
\label{sec:experimental-results}
\vspace*{-3pt}

\textbf{Controlled quantization comparison in latent space.}~
Table~\ref{tab:vq pq sq} reports the primary ImageNet-1K comparison under matched latent feature statistics, token count, and code-space cardinality. A clear distortion hierarchy emerges: the best VQ method achieves lower distortion than the best PQ method, which in turn outperforms the best SQ method. Specifically, MMD VQ attains $\mathcal{E}=0.201$, compared with $0.209$ for PQ and $0.231$ for SQ, consistent with the theoretical ordering
$\mathcal{E}^{\star}_{\text{VQ}}
\leq
\mathcal{E}^{\star}_{\text{PQ}}
\leq
\mathcal{E}^{\star}_{\text{SQ}}$
in \Cref{sec:quantization error analysis}. Lower distortion also yields better reconstruction after decoder adaptation: MMD VQ achieves the best r-FID of $0.86$, compared with $0.93$ for PQ and $1.07$ for SQ. Meanwhile, several PQ and SQ methods attain $100\%$ codebook utilization despite higher distortion, showing that high utilization alone does not imply stronger quantization effectiveness.

\begin{wrapfigure}[11]{r}{0.39\textwidth}
    \vspace{-1pt}
    \begin{center}
    \includegraphics[width=0.37\textwidth]{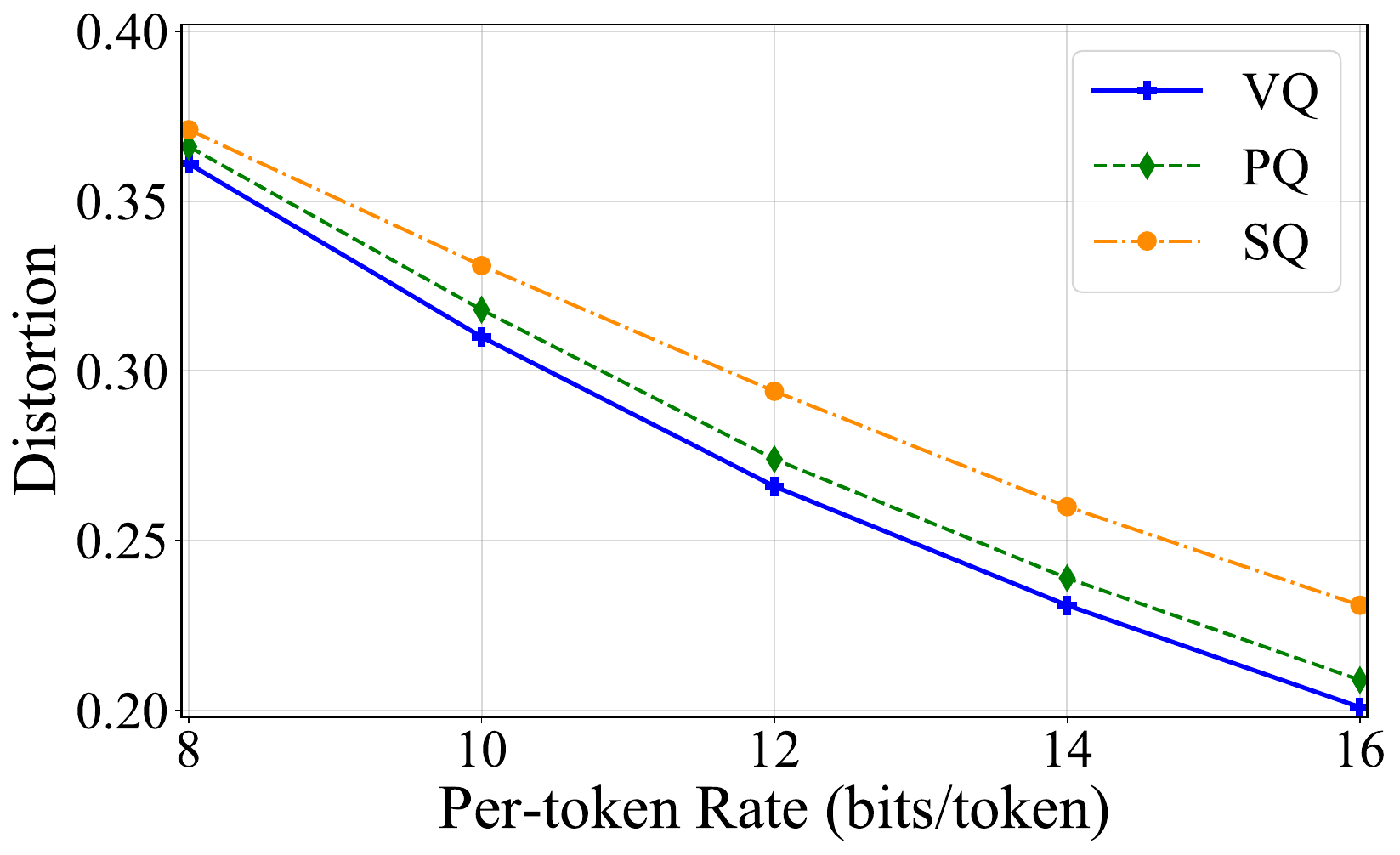}
    \vspace{-8pt}
    \caption{\small{
    Rate–distortion curves on ImageNet-1K, reporting the best result within each quantization family at each rate.
    }}
    \label{fig:rd-curve}
    \end{center}
    \vspace{-8pt}
\end{wrapfigure}
\paragraph{Rate-distortion curves.}
Figure~\ref{fig:rd-curve} compares the best-performing method within each quantization family across different coding rates. Consistent with our family-level theoretical comparison, at each nominal per-token rate $\log_2 K$, we report the minimum distortion achieved among the evaluated variants within each quantization family. Increasing the coding rate monotonically reduces distortion for all three quantization families. More importantly, the ordering remains consistent across the evaluated range: VQ achieves the lowest distortion at every operating point, followed by PQ and SQ. This persistent separation across representation budgets is consistent with the theoretical hierarchy in \Cref{sec:quantization error analysis} and shows that VQ's distortion advantage is not specific to a particular code-space cardinality.

\paragraph{Distortion versus codebook utilization.}
We further examine whether distortion or codebook utilization is more strongly associated with reconstruction fidelity. Using Table~\ref{tab:vq pq sq}, we compute Spearman's rank correlations with r-FID. Distortion shows a near-perfect positive correlation with r-FID ($\rho=0.996$, $p<10^{-8}$), whereas codebook utilization exhibits a substantially weaker negative correlation ($\rho=-0.540$, $p=0.057$). Thus, distortion is considerably more informative of reconstruction fidelity than utilization, supporting the distortion-minimization principle in \Cref{sec:information loss minimization}: high utilization alone does not ensure faithful preservation of the source representation.

\paragraph{Additional rate-distortion analyses.}
We further analyze the effects of code-space cardinality $K$, token count $T$, and matched-rate allocations. Increasing either $K$ or $T$ reduces distortion, while equal-rate configurations yield similar but not identical performance. See Appendix~\ref{appendix:codebook size and tokens} for full results.

\paragraph{Generalization across datasets and representation spaces.} We further verify that our findings generalize beyond ImageNet-1K and the pretrained VAR latent space. Under the same controlled latent-space protocol, VQ retains its advantage over PQ and SQ on FFHQ and CelebA-HQ, with distortion remaining more strongly associated with reconstruction fidelity than codebook utilization (Appendix~\ref{appendix:generalization datasets}). A controlled pixel-space comparison on CelebA-HQ yields the same qualitative conclusions, with VQ achieving the lowest distortion and best reconstruction fidelity (Appendix~\ref{appendix:pixel space}). These results demonstrate robustness across datasets and representation spaces.
\vspace*{-3pt}
\section{Conclusion}
\label{sec:conclusion}
\vspace*{-3pt}

In this work, we presented a unified rate-distortion perspective for understanding and comparing vector, product, and scalar quantization. We characterize quantization error as distortion and the nominal fixed-length coding budget through token count and discrete code-space cardinality. This perspective yields three main conclusions. First, distortion minimization is a more fundamental optimization principle than codebook utilization: under mild conditions, it implies full utilization and controls the local STE-induced gradient discrepancy. Second, fair intrinsic comparison requires matched latent distributions and nominal coding rates. Third, beyond the classical ordering
$\mathcal{E}^{\star}_{\text{VQ}}
\leq\mathcal{E}^{\star}_{\text{PQ}}
\leq\mathcal{E}^{\star}_{\text{SQ}}$, VQ can exploit low-dimensional source structure to achieve substantially
better distortion scaling than fixed coordinate-factorized PQ and SQ. Our controlled experiments support these conclusions: distortion is more strongly associated with reconstruction fidelity than codebook utilization, and modern VQ methods achieve lower distortion and better reconstruction performance than PQ and SQ under matched representation budgets. Overall, our results provide a principled framework for evaluating intrinsic quantization effectiveness and for guiding the analysis and design of future discrete representation learning systems.

\section*{Acknowledgements}

Dehan Kong was partially supported by the Natural Sciences and Engineering Research Council of Canada (NSERC) Discovery Grant RGPIN-2022-04646.

\printbibliography

\clearpage

\begin{appendices}
\section{Proof of \Cref{theorem:distortion-vs-codebook-utilization}}
\label{appendix:proof-distortion-codebook-utilization}

We prove the relationship between distortion minimization and codebook utilization. Let $X$ be a random variable supported on $\mathcal{X}\subseteq\mathbb{R}^d$.
A deterministic quantizer with codebook size $K$ is a measurable mapping $ f:\mathcal{X}\to\{1,\dots,K\}.$
Given a reconstruction mapping $\hat{x}:\{1,\dots,K\}\to\mathbb{R}^d,$ define the squared-error distortion as
\[
\mathcal{E}(f,\hat{x})
\coloneqq
\mathbb{E}\!\left[
\|X-\hat{x}(f(X))\|_2^2
\right].
\]

For a fixed quantizer $f$, the optimal reconstruction is given by the conditional mean $ \hat{x}_f(k) = \mathbb{E}[X\mid f(X)=k],$
for every codeword $k$ with $\mathbb{P}(f(X)=k)>0$.
Substituting the optimal reconstruction yields
\[
\mathcal{E}(f)
\coloneqq
\min_{\hat{x}}
\mathcal{E}(f,\hat{x})
=
\mathbb{E}\!\left[
\|X-\mathbb{E}[X\mid f(X)]\|_2^2
\right].
\]

Equivalently,
$
\mathcal{E}(f)
=
\mathbb{E}\!\left[
\operatorname{Var}(X\mid f(X))
\right],
$
where for a vector-valued random variable,
\[
\operatorname{Var}(X\mid f(X))
\coloneqq
\mathbb{E}\!\left[
\|X-\mathbb{E}[X\mid f(X)]\|_2^2
\,\middle|\,
f(X)
\right].
\]

\begin{proof}
\textbf{Part (a).}
We prove by contradiction.

Suppose that $f^\star$ minimizes distortion but does not use all codewords.
Then there exists an unused codeword $k_{\mathrm{new}}$ such that
\[
\mathbb{P}(f^\star(X)=k_{\mathrm{new}})=0.
\]

Since no zero-distortion quantizer exists, we must have
$
\mathcal{E}(f^\star)>0.
$
Therefore, there exists at least one quantization cell
\[
A^\star
=
\{x\in\mathcal{X}:f^\star(x)=k^\star\}
\]
with positive probability and nonzero conditional variance.

By Assumption~\ref{assumption:splittability}, there exists a measurable
partition
\[
A^\star=A_1\dot{\cup}A_2
\]
such that both $A_1$ and $A_2$ have positive probability and
\[
\mu_i = \mathbb{E}[X\mid X\in A_i], \qquad \mu_1\neq\mu_2.
\]

Construct a refined quantizer $f'$ by splitting $A^\star$:
\[
f'(x)
=
\begin{cases}
k^\star,
& x\in A_1,
\\
k_{\mathrm{new}},
& x\in A_2,
\\
f^\star(x),
& x\notin A^\star.
\end{cases}
\]

For all cells other than $A^\star$, the distortion contribution remains unchanged.
Within $A^\star$, the law of total variance gives
\[
\operatorname{Var}(X\mid X\in A^\star)
=
\sum_{i=1}^2
\mathbb{P}(A_i\mid A^\star)
\operatorname{Var}(X\mid X\in A_i)
+
\sum_{i=1}^2
\mathbb{P}(A_i\mid A^\star)
\|\mu_i-\mu\|_2^2,
\]
where
$
\mu
=
\mathbb{E}[X\mid X\in A^\star].
$

Since $\mu_1\neq\mu_2$, the second term is strictly positive.
Hence
\[
\sum_{i=1}^2
\mathbb{P}(A_i\mid A^\star)
\operatorname{Var}(X\mid X\in A_i)
<
\operatorname{Var}(X\mid X\in A^\star).
\]
Therefore,
$
\mathcal{E}(f')
<
\mathcal{E}(f^\star),
$
contradicting the optimality of $f^\star$. Hence every distortion-minimizing quantizer must use all $K$ codewords:
$
\mathrm{U}(f^\star)=1.
$

\textbf{Part (b).}
We construct a full-utilization quantizer that is not distortion-optimal. Let $K=2$ and let $X\sim \mathrm{Unif}[0,1]$.
Consider the quantizer
\[
f_0(x)
=
\begin{cases}
1,
& x\in[0,1/2],
\\
2,
& x\in(1/2,1].
\end{cases}
\]
Its optimal reconstructions are
\[
\hat{x}_{f_0}(1)=\frac14,
\qquad
\hat{x}_{f_0}(2)=\frac34.
\]
Hence
\[
\mathcal{E}(f_0)
=
\int_0^{1/2}
\left(x-\frac14\right)^2 dx
+
\int_{1/2}^{1}
\left(x-\frac34\right)^2 dx
=
\frac{1}{48}.
\]

Now define another quantizer
\[
g(x)
=
\begin{cases}
1,
& x\in[0,9/10],
\\
2,
& x\in(9/10,1].
\end{cases}
\]
Both codewords are used, so
$
\mathrm{U}(g)=1.
$

The optimal reconstructions are
\[
\hat{x}_{g}(1)=\frac{9}{20},
\qquad
\hat{x}_{g}(2)=\frac{19}{20}.
\]
Therefore,
\[
\mathcal{E}(g)
=
\int_0^{9/10}
\left(x-\frac{9}{20}\right)^2 dx
+
\int_{9/10}^{1}
\left(x-\frac{19}{20}\right)^2 dx
=
\frac{73}{1200}.
\]

Since
\[
\frac{73}{1200}
>
\frac{1}{48},
\]
we obtain
\[
\min_{f:\mathcal{X}\to\{1,2\}}
\mathcal{E}(f)
\le
\mathcal{E}(f_0)
<
\mathcal{E}(g).
\]
Thus $g$ uses all codewords but is not distortion-optimal.
\end{proof}

\section{Proof of \Cref{thm:worst-case}}
\label{appendix:proof-thm-worst-case}
We first prove the upper bound by constructing an SQ scheme. Let
$n = \floor{K^{1/d}}$. Since $K \geq 2^d$, we have $K^{1/d} \geq 2$ and hence
$n \geq K^{1/d}/2$. For each coordinate $m\in[d]$, partition $[-1,1]$ into
$n$ intervals of equal length $2/n$ and choose their midpoints as scalar
reconstruction levels,
\begin{align*}
    \phi_m = \left\{-1 + \frac{2i-1}{n}: i=1,\ldots,n\right\}.
\end{align*}
Let $\phi=\bigoplus_{m=1}^d \phi_m$. Then $|\phi|=n^d\leq K$. For any
$x\in[-1,1]^d$, nearest-neighbor scalar quantization satisfies
$|x_m-Q_{\phi_m}(x_m)|\leq 1/n$ for every coordinate. Therefore,
\begin{align*}
    \Exs\!\left[\vecnorm{X-\quantizer{\phi}(X)}{2}^2\right]
    &\leq \sup_{x\in[-1,1]^d}\vecnorm{x-\quantizer{\phi}(x)}{2}^2 \\
    &\leq \frac{d}{n^2}
    \leq \frac{4d}{K^{2/d}}.
\end{align*}
Since this construction is valid for every $\Prob\in\mathcal P$,
\begin{align*}
    \sup_{\Prob\in\mathcal P}\optimalErrSQ(\Prob,K)
    \leq \frac{4d}{K^{2/d}}.
\end{align*}

We next prove the lower bound using the uniform distribution
$\Prob=\operatorname{Unif}([-1,1]^d)$. Let
$\phi=\{e_1,\ldots,e_K\}$ be an arbitrary VQ codebook, and let
$A_1,\ldots,A_K$ be its nearest-neighbor cells intersected with $[-1,1]^d$.
Write $v_k=|A_k|$ and let $V_d$ denote the volume of the Euclidean unit ball
in $\mathbb R^d$. For any measurable set $A\subseteq\mathbb R^d$ of volume
$v$ and any $e\in\mathbb R^d$, the centered Euclidean ball of volume $v$
minimizes the second moment about $e$, so
\begin{align*}
    \int_A \vecnorm{x-e}{2}^2\,dx
    \geq \frac{d}{d+2}V_d^{-2/d}v^{1+2/d}.
\end{align*}
Consequently,
\begin{align*}
    \Exs\!\left[\vecnorm{X-\quantizer{\phi}(X)}{2}^2\right]
    &=\frac{1}{2^d}\sum_{k=1}^K
      \int_{A_k}\vecnorm{x-e_k}{2}^2\,dx \\
    &\geq \frac{1}{2^d}\frac{d}{d+2}V_d^{-2/d}
      \sum_{k=1}^K v_k^{1+2/d}.
\end{align*}
Because $\sum_{k=1}^K v_k=2^d$, convexity gives
\begin{align*}
    \sum_{k=1}^K v_k^{1+2/d}
    \geq K\left(\frac{2^d}{K}\right)^{1+2/d}
    =2^{d+2}K^{-2/d}.
\end{align*}
Thus
\begin{align*}
    \Exs\!\left[\vecnorm{X-\quantizer{\phi}(X)}{2}^2\right]
    \geq \frac{4d}{d+2}V_d^{-2/d}K^{-2/d}.
\end{align*}
Finally, $V_d^{2/d}\leq 32/(d+2)$ for every integer $d\geq1$.
For $d=1,2$ this follows directly from $V_1=2$ and $V_2=\pi$; for
$d\geq3$, it follows from the standard bound
$V_d\leq(2\pi e/d)^{d/2}$ and
$2\pi e/d\leq32/(d+2)$. Hence
\begin{align*}
    \Exs\!\left[\vecnorm{X-\quantizer{\phi}(X)}{2}^2\right]
    \geq \frac{d}{8}K^{-2/d}.
\end{align*}
Since the codebook $\phi$ was arbitrary,
\begin{align*}
    \sup_{\Prob\in\mathcal P}\optimalErrVQ(\Prob,K)
    \geq \frac{d}{8K^{2/d}}.
\end{align*}
Combining this with
$\optimalErrVQ(\Prob,K)\leq\optimalErrSQ(\Prob,K)$ completes the proof.

\section{Proof of \Cref{thm:adaptive}}
\label{appendix:proof-thm-adaptive}
We first prove the VQ upper bound. By assumption, there exists a
$d_\text{eff}$-dimensional linear subspace $L\subseteq\real^d$ such that
$\Prob$ is supported on $L\cap[-1,1]^d$. Let
$U\in\real^{d\times d_\text{eff}}$ have orthonormal columns spanning $L$.
For $X\sim\Prob$, write $X=UZ$ with $Z=U^\top X$, and let
$\widetilde{\Prob}$ denote the distribution of $Z$. Since $U$ is an
isometry on $\real^{d_\text{eff}}$,
\begin{align*}
    \vecnorm{Z}{2}=\vecnorm{UZ}{2}=\vecnorm{X}{2}\leq\sqrt d,
\end{align*}
and hence $\vecnorm{Z}{\infty}\leq\sqrt d$. Therefore,
$\widetilde{\Prob}$ is supported on
$[-\sqrt d,\sqrt d]^{d_\text{eff}}$.

Let $n=\floor{K^{1/d_\text{eff}}}$. Because
$K\geq2^{d_\text{eff}}$, we have
$n\geq K^{1/d_\text{eff}}/2$. Quantize each coordinate of $Z$
independently using $n$ equally spaced intervals on
$[-\sqrt d,\sqrt d]$ and their midpoints as reconstruction levels. The
resulting Cartesian-product codebook contains $n^{d_\text{eff}}\leq K$
points, and every coordinate incurs squared error at most $d/n^2$.
Consequently,
\begin{align*}
    \Exs_{Z\sim\widetilde{\Prob}}
    \big[\vecnorm{Z-\quantizer{\tilde\phi}(Z)}{2}^2\big]
    \leq \frac{d\,d_\text{eff}}{n^2}
    \leq \frac{4d\,d_\text{eff}}{K^{2/d_\text{eff}}}.
\end{align*}
Mapping every reconstruction point $\tilde e$ back to $U\tilde e\in L$
preserves Euclidean distances. Thus the same distortion is achievable by
a VQ codebook in $\real^d$, proving
\begin{align*}
    \optimalErrVQ(\Prob,K)
    \leq \frac{4d\,d_\text{eff}}{K^{2/d_\text{eff}}}.
\end{align*}

We next prove the PQ and SQ lower bounds. Consider
\begin{align*}
    L=\{\alpha\bm 1:\alpha\in\real\},
    \qquad
    \bm 1=(1,\ldots,1)^\top\in\real^d,
\end{align*}
and let $X=A\bm 1$, where $A\sim\operatorname{Unif}[-1,1]$.
Then $X$ is supported on $L\cap[-1,1]^d$.

We use the following elementary one-dimensional fact. If
$A\sim\operatorname{Unif}[-1,1]$, then every quantizer with at most $n$
reconstruction values satisfies
\begin{align}
    \Exs\big[(A-\widehat A)^2\big]\geq \frac{1}{3n^2}.
    \label{eq:uniform-scalar-lower-bound}
\end{align}
Indeed, for a nearest-neighbor scalar quantizer, let the lengths of its
nonempty Voronoi cells be $\ell_1,\ldots,\ell_r$, where $r\leq n$ and
$\sum_{j=1}^r\ell_j=2$. Replacing the reconstruction value in each cell
by its midpoint can only decrease squared error, so
\begin{align*}
    \Exs\big[(A-\widehat A)^2\big]
    \geq \frac12\sum_{j=1}^r\frac{\ell_j^3}{12}
    =\frac1{24}\sum_{j=1}^r\ell_j^3
    \geq\frac1{24}\frac{(\sum_{j=1}^r\ell_j)^3}{r^2}
    \geq\frac{1}{3n^2},
\end{align*}
where the penultimate inequality follows from convexity.

Now consider an arbitrary coordinate-block PQ quantizer with $M$ blocks
of dimensions $d_1,\ldots,d_M$, where $d_m\geq1$ and
$\sum_{m=1}^M d_m=d$. Let the $m$-th sub-codebook contain $n_m$ points,
with $\prod_{m=1}^M n_m\leq K$. The input to block $m$ is
$A\bm 1_{d_m}$. For any reconstruction point in $\real^{d_m}$,
orthogonally projecting it onto the line
$\operatorname{span}(\bm 1_{d_m})$ cannot increase its distance to
$A\bm 1_{d_m}$. Hence an $n_m$-point vector quantizer for this block
cannot have smaller distortion than the optimal $n_m$-level scalar
quantizer for $A$, scaled by $\vecnorm{\bm 1_{d_m}}{2}^2=d_m$.
Using \eqref{eq:uniform-scalar-lower-bound},
\begin{align*}
    \Exs\big[
    \vecnorm{A\bm 1_{d_m}-\quantizer{\phi_m}(A\bm 1_{d_m})}{2}^2
    \big]
    \geq \frac{d_m}{3n_m^2}
    \geq \frac{d_m}{4n_m^2}.
\end{align*}
Summing over the mutually orthogonal coordinate blocks gives
\begin{align*}
    \Exs\big[\vecnorm{X-\quantizer{\phi}(X)}{2}^2\big]
    &\geq \frac14\sum_{m=1}^M\frac{d_m}{n_m^2}\\
    &\geq \frac{M}{4}
    \left(\prod_{m=1}^M\frac{d_m}{n_m^2}\right)^{1/M}\\
    &\geq \frac{M}{4}
    \left(\prod_{m=1}^M n_m\right)^{-2/M}\\
    &\geq \frac{M}{4}K^{-2/M}.
\end{align*}
Here the second line is AM--GM, and the third uses $d_m\geq1$.
Because the PQ codebooks and rate allocation $(n_1,\ldots,n_M)$ were
arbitrary, taking the infimum proves
\begin{align*}
    \optimalErrPQ(\Prob,K,M)\geq\frac{M}{4}K^{-2/M},
    \qquad 2\leq M\leq d.
\end{align*}
Finally, SQ is the case $M=d$ and $d_m=1$ for every block, yielding
\begin{align*}
    \optimalErrSQ(\Prob,K)\geq\frac{d}{4}K^{-2/d}.
\end{align*}
This completes the proof.

\section{Relationship to Classical Quantization Theory}
\label{appendix:classical-quantization-theory}

The $K^{-2/d}$ distortion scaling in Proposition~\ref{thm:worst-case} has the same high-rate exponent as Zador's foundational asymptotic quantization theorem~\citep{Zador1982} and subsequent multidimensional extensions~\citep{BucklewWise1982}, as surveyed by \citet{GrayNeuhoff1998}. These classical results characterize optimal quantization distortion for regular $d$-dimensional sources in the high-rate regime. Proposition~\ref{thm:worst-case}, by contrast, is a finite-$K$ worst-case result over distributions supported on a bounded domain. Similarly, under the quantizer families defined in \Cref{sec:quantization error analysis}, the nested ordering $\optimalErrVQ \le \optimalErrPQ \le \optimalErrSQ$ follows directly from the structural inclusion
$\mathrm{SQ} \subseteq \mathrm{PQ} \subseteq \mathrm{VQ}$: SQ is the fully factorized special case of PQ, while a PQ codebook is a Cartesian-product-structured VQ codebook. This relationship is standard in classical treatments of vector and product quantization~\citep{GershoGray1992,GrayNeuhoff1998}. Proposition~\ref{thm:worst-case} therefore serves as a self-contained finite-rate reference point within our framework rather than as a novel asymptotic quantization result.

The main theoretical distinction of our analysis lies in Proposition~\ref{thm:adaptive}, which provides the explicit, non-asymptotic lower bound $\optimalErrPQ(P,K,M) \ge \tfrac{M}{4}K^{-2/M}$
for the fixed coordinate-block PQ family on an axis-misaligned one-dimensional source. This yields a quantitative VQ--PQ separation whose scaling depends explicitly on the number of blocks $M$. Prior work has established that PQ distortion can depend strongly on the choice of subspace decomposition and can be improved by optimizing the orientation or decomposition of the feature space in approximate nearest-neighbor search~\citep{GeEtAl2013,NorouziFleet2013}. To our knowledge, however, this literature does not provide the explicit finite-$K$, block-count-dependent lower bound established here.

Our framework also extends beyond classical distortion analysis in two directions. Appendix~\ref{appendix:gradient gap distortion} connects
quantization distortion to the STE-induced gradient gap, relating quantization error to the optimization dynamics of learned discrete representations. In addition, our empirical study compares VQ, PQ, and SQ under matched source representations, token counts, and composite
code-space cardinalities, providing a controlled visual-tokenization setting for examining the theoretical relationships above.

\section{Distortion Minimization and STE Gradient Discrepancy}
\label{appendix:gradient gap distortion}

In this section, we derive a local relationship between quantization distortion and the gradient discrepancy induced by the Straight-Through Estimator
(STE)~\citep{Bengio2013EstimatingOP}. The analysis shows that, under standard local smoothness, the gradient discrepancy vanishes quadratically with the quantization distance, providing theoretical support for distortion minimization as a means of improving the local fidelity of the STE gradient.

\textbf{Problem Setup}: Let $\mathcal{L}$ denote the loss as a function of the latent representation. In the quantization process, the encoder outputs a continuous latent vector $z_e$, which is discretized into a quantized latent representation $z_q$. During backpropagation, the non-differentiable quantization
operation is bypassed using the STE, which propagates the gradient evaluated at the quantized representation:
\begin{equation}
    \text{STE Approximation:} \quad
    \frac{\partial \mathcal{L}}{\partial z_e}
    \leftarrow
    \frac{\partial \mathcal{L}}{\partial z_q}.
\vspace{-3pt}
\end{equation}

We define the \textbf{gradient gap} $\mathcal{G}$ as the squared discrepancy between the gradients evaluated at the continuous and quantized latent representations:
\begin{equation}
    \mathcal{G}
    \triangleq
    \left\|
    \frac{\partial \mathcal{L}}{\partial z_e}
    -
    \frac{\partial \mathcal{L}}{\partial z_q}
    \right\|_2^2.
\vspace{-3pt}
\end{equation}

\textbf{Local Taylor Analysis}:
Assume that $\mathcal{L}$ is twice differentiable in a neighborhood of $z_q$.
Let
\[
    \mathbf{H}(z_q)
    =
    \left.
    \frac{\partial^2 \mathcal{L}}{\partial x^2}
    \right|_{x=z_q}
\]
denote the Hessian at $z_q$. A first-order expansion of the gradient around $z_q$ gives
\begin{equation}
    \frac{\partial \mathcal{L}}{\partial z_e}
    =
    \frac{\partial \mathcal{L}}{\partial z_q}
    +
    \mathbf{H}(z_q)(z_e-z_q)
    +
    o(\|z_e-z_q\|_2).
    \label{eq:taylor}
\vspace{-3pt}
\end{equation}

Therefore,
\begin{equation}
    \frac{\partial \mathcal{L}}{\partial z_e}
    -
    \frac{\partial \mathcal{L}}{\partial z_q}
    =
    \mathbf{H}(z_q)(z_e-z_q)
    +
    o(\|z_e-z_q\|_2),
\vspace{-3pt}
\end{equation}
and hence
\begin{equation}
    \mathcal{G}
    =
    \left\|
    \mathbf{H}(z_q)(z_e-z_q)
    +
    o(\|z_e-z_q\|_2)
    \right\|_2^2
    =
    \mathcal{O}(\|z_e-z_q\|_2^2).
    \label{eq:bound}
\vspace{-3pt}
\end{equation}

Moreover, the leading-order term is
\begin{equation}
    \mathcal{G}
    =
    \|\mathbf{H}(z_q)(z_e-z_q)\|_2^2
    +
    o(\|z_e-z_q\|_2^2).
\vspace{-3pt}
\end{equation}

Since $\|z_e-z_q\|_2^2$ is exactly the squared quantization distortion, Equation~\ref{eq:bound} establishes a local connection between distortion
and the STE-induced gradient discrepancy: as the quantization distortion approaches zero, the gradient gap also vanishes at least quadratically in the
quantization distance. Thus, reducing distortion makes the gradient evaluated at the quantized representation locally closer to that evaluated at the
continuous representation. This provides a theoretical explanation for why distortion reduction can improve the fidelity of gradient propagation through the STE.

\section{Effect of Latent Scale on Quantization Distortion}
\label{appendix:error-variance-relationship}

We provide both a theoretical analysis and an empirical verification of the effect of latent feature scale on optimal quantization distortion. The result motivates the matched-distribution condition introduced in \Cref{sec:fair comparison}.

\paragraph{Theoretical analysis.}
Let the optimal $K$-point vector-quantization distortion under squared Euclidean distance be
\begin{equation}
    \mathcal{E}^{\star}(X,K)
    =
    \inf_{\mathcal C:\,|\mathcal C|=K}
    \mathbb{E}
    \left[
        \min_{c\in\mathcal C}
        \|X-c\|_2^2
    \right].
    \label{eq:optimal_vq_distortion}
\end{equation}

\begin{proposition}[Scale dependence of optimal distortion]
\label{prop:distortion-scaling}
For any scalar $a\in\mathbb{R}$,
\begin{equation}
    \mathcal{E}^{\star}(aX,K)
    =
    a^2\mathcal{E}^{\star}(X,K).
    \label{eq:appendix_distortion_scaling}
\end{equation}
\end{proposition}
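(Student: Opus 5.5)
The plan is to prove the identity by exhibiting a bijection between codebooks for $X$ and codebooks for $aX$ that rescales the per-codebook distortion by exactly $a^2$, and then pass to the infimum. I treat $a\neq 0$ first and the degenerate case $a=0$ separately.

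\textbf{Case $a\neq0$.} Fix any codebook $\mathcal C$ with $|\mathcal C|=K$ and set $a\mathcal C=\{ac: c\in\mathcal C\}$. Since the map $c\mapsto ac$ is injective, $|a\mathcal C|=K$. For every realization,
\[
\min_{c'\in a\mathcal C}\|aX-c'\|_2^2=\min_{c\in\mathcal C}\|aX-ac\|_2^2=a^2\min_{c\in\mathcal C}\|X-c\|_2^2 .
\]
Taking expectations gives $D(aX,a\mathcal C)=a^2D(X,\mathcal C)$, where $D$ denotes the expected distortion inside the infimum in \eqref{eq:optimal_vq_distortion}. The map $\mathcal C\mapsto a\mathcal C$ is a bijection on the family of $K$-point subsets of $\mathbb{R}^d$, with inverse $\mathcal C\mapsto a^{-1}\mathcal C$. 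Hence the infimum over codebooks for $aX$ equals the infimum of $a^2D(X,\mathcal C)$ over codebooks for $X$. Because $a^2\ge0$, the factor $a^2$ can be pulled out of the infimum, which yields $\mathcal{E}^\star(aX,K)=a^2\mathcal{E}^\star(X,K)$. This argument also covers the case where the infimum is $+\infty$, under the convention $a^2\cdot\infty=\infty$ for $a\neq0$.

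\textbf{Case $a=0$.} Here $aX\equiv0$. Any codebook containing the origin, padded with $K-1$ arbitrary distinct points, gives zero distortion, so $\mathcal{E}^\star(0,K)=0$. This matches the right-hand side $0\cdot\mathcal{E}^\star(X,K)$ provided $\mathcal{E}^\star(X,K)<\infty$. I will note that we assume $\mathbb{E}\|X\|_2^2<\infty$, which is implicit throughout the paper; under it, $\mathcal{E}^\star(X,K)\le\mathbb{E}\|X-c\|_2^2<\infty$ for any fixed codebook.

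I expect no real obstacle. The only care needed is in the two edge points: the bijection must preserve the exact cardinality $K$, which is why $a=0$ has to be treated separately (scaling by zero would collapse the codebook to a single point), and finiteness of the second moment must be in place for the $a=0$ case.
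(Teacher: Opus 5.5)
Your proof is correct and follows the paper's argument essentially verbatim. Like the paper, you use the bijection $\mathcal C'\mapsto a\mathcal C'$ on $K$-point codebooks for $a\neq0$, pull $a^2$ out of the infimum, and treat $a=0$ separately because both sides vanish; your explicit remark that the $a=0$ case needs $\mathcal{E}^{\star}(X,K)<\infty$ (e.g., via $\mathbb{E}\|X\|_2^2<\infty$) makes precise a point the paper leaves implicit.
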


\begin{proof}
For $a\neq0$, the mapping
\[
    \mathcal C'
    \mapsto
    a\mathcal C'
    =
    \{ac' : c'\in\mathcal C'\}
\]
is a bijection over the set of all $K$-point codebooks. Hence,
\begin{align}
\mathcal{E}^{\star}(aX,K)
&=
\inf_{\mathcal C:\,|\mathcal C|=K}
\mathbb{E}
\left[
    \min_{c\in\mathcal C}
    \|aX-c\|_2^2
\right]
\\
&=
\inf_{\mathcal C':\,|\mathcal C'|=K}
\mathbb{E}
\left[
    \min_{c'\in\mathcal C'}
    \|aX-ac'\|_2^2
\right]
\\
&=
a^2
\inf_{\mathcal C':\,|\mathcal C'|=K}
\mathbb{E}
\left[
    \min_{c'\in\mathcal C'}
    \|X-c'\|_2^2
\right]
\\
&=
a^2\mathcal{E}^{\star}(X,K).
\end{align}
For $a=0$, both sides are zero, completing the proof.
\end{proof}

Proposition~\ref{prop:distortion-scaling} shows that the optimal squared quantization distortion is homogeneous of degree two with respect to global scaling of the source distribution. Importantly, this result does not rely on a Gaussian assumption or a high-rate approximation.

A useful special case follows by writing a location-scale family as
\begin{equation}
    X = \mu + \sigma Z,
\end{equation}
where $Z$ has a fixed reference distribution. Since translation can be absorbed by an equal translation of the codebook,
\begin{equation}
    \mathcal{E}^{\star}(\mu+\sigma Z,K)
    =
    \sigma^2\mathcal{E}^{\star}(Z,K).
    \label{eq:variance_distortion_relation}
\end{equation}
Thus, when the shape of the latent distribution is fixed and only its scale varies, the optimal distortion grows linearly with the source variance $\sigma^2$.

\section{Limitations}
\label{app:limitations}
Our results should be interpreted within several important scope boundaries. First, we study the intrinsic effectiveness of quantization under a nominal fixed-length representation rate, $R=T\log_2 K$, rather than the expected bitstream length produced by entropy coding. Consequently, our analysis does
not directly characterize entropy-constrained quantization, variable-length coding, or adaptive bit allocation, where the probability distribution of discrete symbols also contributes to the effective coding rate. Extending the framework to these settings would provide a closer connection to classical
rate-distortion optimization in learned compression.

Second, our theoretical analysis adopts squared Euclidean quantization distortion as the intrinsic information-loss measure. This choice enables a clean characterization of source scaling and of the structural relationships
among VQ, PQ, and SQ, but it does not directly cover perceptual, semantic, or task-dependent distortion measures. Moreover, the theoretical distortion hierarchy concerns the optimal achievable distortion within each quantizer family; it does not imply that every practical VQ optimization algorithm will outperform every PQ or SQ method under finite-data and finite-optimization regimes.

Finally, our conclusions concern \emph{intrinsic quantization and reconstruction effectiveness}, rather than downstream generative modeling performance. Improved reconstruction fidelity does not necessarily imply improved generation
quality~\citep{Ramanujan2024WhenWorse,Yao2025ReconstructionVG}, since downstream performance also depends on properties of the resulting discrete representation and on the capacity and optimization of the generative model. Understanding how intrinsic quantization distortion interacts with representation
modelability and downstream generation therefore remains an important direction for future work.

\section{Experimental Details}
\label{appendix:experimental details}
\paragraph{Data Preprocessing.} For all three datasets, ImageNet-1k~\citep{Deng2009ImageNetAL}, FFHQ~\citep{Karras2018ASG}, and CelebA-HQ~\citep{Karras2017ProgressiveGO}, we follow Llama Gen~\citep{Sun2024AutoregressiveMB} by applying iterative box downsampling to resize all images to a 256×256 resolution. 

\paragraph{Encoder-Decoder Architecture.} For experiments conducted in the latent space, we adopt the VQ-Transplant framework~\citep{Fang2026VQTransplant} and initialize all quantization methods with the same pretrained VAR tokenizer~\citep{Tian2024VisualAM}. This setup fixes the encoder-decoder architecture and latent representation across methods, enabling a controlled comparison of different quantization algorithms under the same latent space. The encoder is a U-Net~\citep{Ronneberger2015UNetCN} that downsamples input images by a factor of 16, producing latent features $\bm{z}_e = \mathcal{E}_\theta(\bm{x}) \in \mathbb{R}^{16 \times 16 \times 32}$ with a spatial resolution of $16 \times 16$. In addition to these latent-space experiments, we also consider a pixel-space setting constructed directly from image pixels without relying on a pretrained encoder-decoder representation.

\paragraph{Training Details.}
Following~\citep{Fang2026VQTransplant}, all latent-space models were trained on two NVIDIA H100 GPUs using the AdamW optimizer~\citep{Loshchilov2017DecoupledWD} with $\beta_1=0.9$ and $\beta_2=0.95$. During VQ module substitution, we used an initial learning rate of $10^{-4}$ with linear decay to $10^{-5}$, while the learning rate was fixed at $10^{-5}$ during decoder adaptation. For ImageNet-1k, VQ module substitution and decoder adaptation were performed for 2 and 5 epochs, respectively. For both FFHQ and CelebA-HQ, VQ module substitution and decoder adaptation were each performed for 30 epochs. The batch size was fixed at 32 per GPU for all latent-space experiments. We additionally conducted pixel-space experiments on CelebA-HQ using a single-stage training procedure for 30 epochs. We used the same AdamW optimizer and batch size as in the latent-space experiments, with an initial learning rate of $10^{-4}$ linearly decayed to $10^{-5}$.

\paragraph{Loss Weights.} For the latent-space experiments, we follow the loss formulation and notation of VQ-Transplant~\citep{Fang2026VQTransplant}. Here, $\lambda_P$ and $\lambda_G$ denote the perceptual and adversarial loss weights, respectively, while $\gamma$ controls the weight of the distribution-matching objective. We set $\lambda_P=1$ and $\lambda_G=0.4$. For configurations using the Wasserstein distance (Wasserstein VQ and Wasserstein VP2), we set $\gamma=0.2$, while for configurations using the MMD distance (MMD VQ and MMD VP2), we set $\gamma=0.5$. For the pixel-space experiments, the training objective consists of the reconstruction, perceptual, and quantization losses, with the perceptual loss weight fixed to $\lambda_P=1$. For Wasserstein- and MMD-based quantizers, the corresponding distribution-matching weight $\gamma$ follows the same setting as in the latent-space experiments.

\begin{wrapfigure}[6]{r}{0.53\textwidth}
\small
\vspace{-7mm}
\begin{center}
\includegraphics[width=0.50\textwidth]{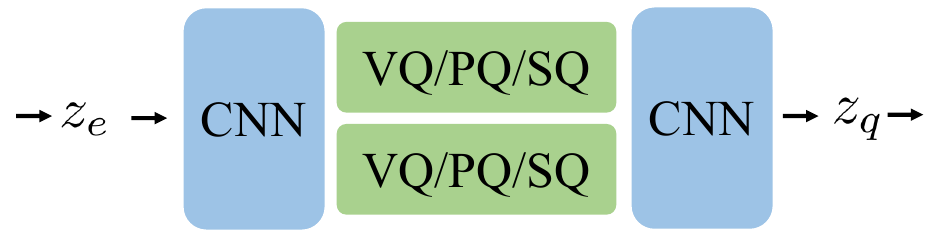}
\vspace{-2mm}
\caption{\small{Illustration of the implementation details.}}
\label{fig:vq-pq-sq}
\end{center}
\end{wrapfigure} 
\paragraph{Latent-Space VQ Implementation.} We implement five VQ variants within the VQ-Transplant framework: Vanilla VQ~\citep{Oord2017NeuralDR}, EMA VQ~\citep{Razavi2019GeneratingDH}, Online VQ~\citep{Zheng2023OnlineCC}, Wasserstein VQ~\citep{Fang2026distributional}, and MMD VQ~\citep{Fang2026VQTransplant}. We first use the pretrained VAR encoder to extract latent features $\bm{z}_e = \mathcal{E}_\theta(\bm{x}) \in \mathbb{R}^{16 \times 16 \times 32}$. A three-layer convolutional projector then processes $\bm{z}_e$ while preserving the 32-dimensional feature space. Each 32-dimensional feature vector is partitioned into two 16-dimensional vectors, each of which is treated as an independent quantization token and processed by a separate VQ module. Consequently, the $16\times16$ latent grid yields a total of $T=512$ quantization tokens. The two quantized 16-dimensional vectors at each spatial location are subsequently concatenated to recover a 32-dimensional representation, as illustrated in Figure~\ref{fig:vq-pq-sq}. A second three-layer convolutional projector maps the concatenated quantized features to the decoder input. The code-space cardinality of each token is fixed to $K=65{,}536$, yielding a nominal representation rate of $R=T\log_2 K$ with $T=512$.

\paragraph{Latent-Space PQ Implementation.}
We implement five PQ variants within the VQ-Transplant framework: Vanilla VP2~\citep{Oord2017NeuralDR}, EMA VP2~\citep{Razavi2019GeneratingDH}, Online VP2~\citep{Zheng2023OnlineCC}, Wasserstein VP2~\citep{Fang2026distributional}, and MMD VP2~\citep{Fang2026VQTransplant}. The overall architecture and tokenization scheme are identical to those used for VQ, with the only change being the internal quantization structure. Specifically, each 16-dimensional quantization token is further partitioned into $M=2$ 8-dimensional sub-vectors. The two sub-vectors are independently quantized using separate codebooks of size $256$, producing a composite code-space cardinality of $256^2=65{,}536$ for each token. Therefore, PQ retains the same number of quantization tokens, $T=512$, and the same effective code-space cardinality, $K=65{,}536$, as VQ, ensuring a matched nominal representation rate.

\paragraph{Latent-Space SQ Implementation.}
We implement three SQ variants within the VQ-Transplant framework: FSQ~\citep{Mentzer2023FiniteSQ}, LFQ~\citep{yu2024language}, and BSQ~\citep{zhao2024image}. The surrounding architecture is kept consistent with the VQ and PQ implementations, while the quantization module is replaced by the corresponding scalar quantizer. For LFQ, each 16-dimensional quantization token is discretized using binary scalar levels, resulting in a code-space cardinality of $2^{16}=65{,}536$. BSQ uses the same 16-dimensional token configuration, with normalization applied before binary spherical quantization, likewise yielding $2^{16}=65{,}536$ possible codes. For FSQ, a three-layer convolutional projector reduces the 32-dimensional feature representation to 16 dimensions, which are organized into two 8-dimensional quantization tokens. Each scalar dimension is discretized into four fixed levels, giving a code-space cardinality of $4^8=65{,}536$ per token. A symmetric three-layer convolutional projector then maps the quantized representation back to 32 dimensions. Thus, all VQ, PQ, and SQ configurations use $T=512$ quantization tokens with an effective code-space cardinality of $K=65{,}536$ per token, ensuring matched nominal representation rate across methods.

\paragraph{Pixel-Space Implementation.}
In addition to the latent-space experiments, we conduct pixel-space experiments on CelebA-HQ, where quantization is performed without a pretrained encoder-decoder. Given an input image $\bm{x}\in\mathbb{R}^{256\times256\times3}$, we first apply PixelUnshuffle with a factor of $4$, which deterministically rearranges each $4\times4$ spatial block into the channel dimension and produces a $64\times64\times48$ representation without discarding information. This representation is then processed by a three-layer convolutional projector that maps the 48 channels to the quantization feature dimension. The projected features are quantized using the corresponding quantization module. Since the resulting feature grid has a spatial resolution of $64\times64$, the pixel-space setting contains $T=4096$ quantization tokens. The code-space cardinality of each token is fixed to $K=65{,}536$, yielding a nominal representation rate of $R=T\log_2K$. After quantization, a symmetric three-layer convolutional projector maps the quantized features back to 48 channels, and PixelShuffle with a factor of $4$ is applied to reconstruct the image at the original $256\times256$ resolution. All compared methods in the pixel-space setting share the same PixelUnshuffle/PixelShuffle operations, convolutional projectors, token count, and code-space cardinality, such that the quantization module remains the primary varying component across methods.

\paragraph{Remark.}
Our comparisons are controlled separately within the latent-space and pixel-space settings. In the latent-space experiments, all methods operate on the same pretrained VAR encoder output and use $T=512$ quantization tokens with a code-space cardinality of $K=65{,}536$ per token. In the pixel-space experiments, all methods share the same PixelUnshuffle operation and convolutional projector, and use $T=4096$ quantization tokens with the same code-space cardinality $K=65{,}536$. Therefore, the nominal representation rate $R=T\log_2K$ is matched across quantization methods within each setting, satisfying Condition~\ref{condition:rate constraints}. With respect to Condition~\ref{condition:feature distribution}, all compared methods within each setting receive the same source representation before quantizer-specific processing, providing a controlled basis for comparing intrinsic quantization effectiveness.

\begin{table*}[!h]
\vspace*{-6pt}
\centering
\caption{\small{Quantization and reconstruction performance under varying per-token code-space cardinality $K$ with the token count fixed at $T=512$.}}
\vspace*{-6pt}
\label{tab:codebook size}
\resizebox{\textwidth}{!}{%
\begin{tabular}{lcccccccccc} \hline
Methods & Phase & Tokens & Codebook Size $K$ & $\mathcal{E}$($\downarrow$) & $\mathrm{U}$ ($\uparrow$) & PSNR($\uparrow$) & SSIM($\uparrow$) & LPIPS ($\downarrow$) & r-FID($\downarrow$) & r-IS($\uparrow$) \\\hline
MMD VQ & Substitution & 512 & 1024 & 0.318 & 99.6\% & 23.75 & 60.8 & 0.162 & 2.68 & 167.6 \\
MMD VQ & Substitution & 512 & 2048 & 0.296 & 99.4\% & 24.12 & 62.4 & 0.159 & 2.59 & 168.8 \\
MMD VQ & Substitution & 512 & 4096 & 0.273 & 99.4\% & 24.41 & 63.5 & 0.141 & 1.96 & 178.0 \\
MMD VQ & Substitution & 512 & 8192 & 0.252 & 99.5\% & 24.67 & 64.6 & 0.135 & 1.85 & 181.0 \\
MMD VQ & Substitution & 512 & 16384 & 0.234 & 99.8\% & 24.89 & 65.4 & 0.130 & 1.84 & 183.7 \\
MMD VQ & Substitution & 512 & 32768 & 0.215 & 99.7\% & 25.06 & 66.3 & 0.126 & 1.79 & 184.8\\
MMD VQ & Substitution & 512 & 65536 & 0.201 & 99.9\% & 25.24 & 66.8 & 0.121 & 1.69 & 187.3 \\\hline\hline 
MMD VQ & Adaptation & 512 & 1024 & 0.318 & 99.6\% & 23.06 & 58.8 & 0.148 & 1.90 & 169.8\\
MMD VQ & Adaptation & 512 & 2048 & 0.296 & 99.4\% & 23.58 & 61.2 & 0.137 & 1.63 & 176.6\\
MMD VQ & Adaptation & 512 & 4096 & 0.273 & 99.4\% & 23.89 & 61.8 & 0.128 & 1.28 &  185.1\\
MMD VQ & Adaptation & 512 & 8192 & 0.252 & 99.5\% & 24.11 & 62.9 & 0.121 & 1.18 & 187.9\\
MMD VQ & Adaptation & 512 & 16384 & 0.234 & 99.8\% & 24.31 & 63.7 & 0.115 & 1.05 & 191.2\\
MMD VQ & Adaptation & 512 & 32768 & 0.215 & 99.9\% & 24.53 & 64.7 & 0.110 & 0.97 & 194.1\\
MMD VQ & Adaptation & 512 & 65536 & 0.201 & 99.9\% & 24.65 & 65.0 & 0.106 & 0.86 & 197.1\\\hline
\end{tabular}
}
\vspace*{-8pt}
\end{table*}

\begin{table*}[!h]
\vspace*{-2pt}
\centering
\caption{\small{Quantization and reconstruction performance under varying token count $T$ with the per-token code-space cardinality fixed at $K=16{,}384$.}}
\vspace*{-6pt}
\label{tab:token length}
\resizebox{\textwidth}{!}{%
\begin{tabular}{lcccccccccc} \hline
Methods & Phase & Tokens & Codebook Size $K$ & $\mathcal{E}$($\downarrow$) & $\mathrm{U}$ ($\uparrow$) & PSNR($\uparrow$) & SSIM($\uparrow$) & LPIPS ($\downarrow$) & r-FID($\downarrow$) & r-IS($\uparrow$) \\\hline
MMD VQ & Substitution & 256 & 16384 & 0.369 & 99.6\% & 22.97 & 57.2 & 0.194 & 4.91 & 141.4 \\
MMD VQ & Substitution & 512 & 16384 & 0.234 & 99.8\% & 24.89 & 65.4 & 0.130 & 1.84 & 183.7 \\
MMD VQ & Substitution & 1024 & 16384 & 0.098 & 100\% & 26.40 & 71.0 & 0.100 & 2.01 & 191.7 \\
MMD VQ & Substitution & 2048 & 16384 & 0.035 & 100\% & 27.16 & 73.1 & 0.089 & 2.36 & 192.2 \\\hline\hline 
MMD VQ & Adaptation & 256 & 16384 & 0.369 & 99.6\% & 22.41 & 55.9 & 0.171 & 3.06 & 148.9 \\
MMD VQ & Adaptation & 512 & 16384 & 0.234 & 99.8\% & 24.31 & 63.7 & 0.115 & 1.05 & 191.2\\
MMD VQ & Adaptation & 1024 & 16384 & 0.098 & 100\% & 26.03 & 69.6 & 0.079 & 0.54 & 210.1\\
MMD VQ & Adaptation & 2048 & 16384 & 0.035 & 100\% & 27.31 & 73.2 & 0.060 & 0.42 & 217.0 \\\hline
\end{tabular}
}
\vspace*{-6pt}
\end{table*}

\begin{table*}[!h]
\centering
\vspace*{-2pt}
\caption{\small{Comparison of different token-count and code-space allocations under matched nominal rate.}}
\vspace*{-6pt}
\label{tab:matched rate}
\resizebox{\textwidth}{!}{%
\begin{tabular}{lccccccccccc}
\toprule
Methods & Phase & Tokens & Codebook Size $K$ & Rate $R$ & $\mathcal{E}$($\downarrow$) & $\mathrm{U}$ ($\uparrow$) & PSNR($\uparrow$) & SSIM($\uparrow$) & LPIPS ($\downarrow$) & r-FID($\downarrow$) & r-IS($\uparrow$) \\
\midrule
MMD VQ & Substitution & 512 &  16384 & 7,168 bits & 0.234 & 99.8\% & 24.89 & 65.4 & 0.130 & 1.84 & 183.7 \\
MMD VQ & Adaptation & 512 & 16384 & 7,168 bits & 0.234 & 99.8\% & 24.31 & 63.7 & 0.115 & 1.05 & 191.2\\
MMD VQ & Substitution & 1024 & 128 & 7,168 bits & 0.243 & 100\% & 24.72 & 64.9 & 0.132 & 1.80 & 184.5 \\
MMD VQ & Adaptation & 1024 & 128 & 7,168 bits & 0.243 & 100\% & 24.01 & 63.0 & 0.118 & 1.10 & 190.4 \\
\midrule
\midrule
MMD VQ & Substitution & 512 & 65536 & 8,192 bits & 0.201 & 99.9\% & 25.24 & 66.8 & 0.121 & 1.69 & 187.3 \\
MMD VQ & Adaptation & 512 & 65536 & 8,192 bits & 0.201 & 99.9\% & 24.65 & 65.0 & 0.106 & 0.86 & 197.1\\
MMD VQ & Substitution & 1024 & 256 & 8,192 bits & 0.212 & 100\% & 25.11 & 66.4 & 0.124 & 1.67 & 187.6 \\
MMD VQ & Adaptation & 1024 & 256 & 8,192 bits & 0.212 & 100\% & 24.51 & 64.6 & 0.108 & 0.96 & 195.3 \\
\bottomrule
\end{tabular}
}
\vspace*{-12pt}
\end{table*}

\section{Rate--Distortion Analysis of Code-Space Cardinality and Token Count}
\label{appendix:codebook size and tokens}

We first examine the effect of code-space cardinality while fixing the token count at $T=512$. Specifically, we vary the per-token cardinality from $K=1{,}024$ to $K=65{,}536$ by successive factors of two. Under the nominal fixed-length rate $R=T\log_2K$, increasing $K$ increases the representation budget. As shown in Table~\ref{tab:codebook size}, the quantization distortion $\mathcal{E}$ decreases monotonically from $0.318$ to $0.201$, while r-FID after decoder adaptation improves from $1.90$ to $0.86$. Thus, increasing $K$ at fixed $T$ reduces quantization distortion and improves reconstruction quality.

We next vary the token count from $T=256$ to $T=2{,}048$ while fixing $K=16{,}384$. As shown in Table~\ref{tab:token length}, increasing $T$ reduces distortion from $0.369$ to $0.035$, while adapted r-FID improves from $3.06$ to $0.42$. These results show that increasing token count can likewise substantially reduce information loss. The different improvement patterns obtained by varying $T$ and $K$ further indicate that the two quantities may affect representation structure differently despite both contributing to the nominal rate.

Finally, we compare different allocations of the same nominal representation rate. Under $R=T\log_2K$, doubling the token count and squaring the
per-token code-space cardinality produce the same nominal rate:
\begin{equation}
2T\log_2K = T\log_2K^2,
\end{equation}
This identity concerns only the representation budget and does not imply identical distortion. Table~\ref{tab:matched rate} compares matched-rate configurations, including $(T,K)=(512,16{,}384)$ versus $(1{,}024,128)$ and $(512,65{,}536)$ versus $(1{,}024,256)$. These pairs achieve similar, though not identical, distortion and reconstruction performance, consistent with the rate accounting in \Cref{sec:rate_distortion_definition} and with the fact that different rate allocations can induce different representation structures.


\begin{table*}[!h]
\centering
\caption{\small{Quantization and reconstruction performance of VQ, PQ, and SQ on FFHQ under matched token count ($T=512$) and code-space cardinality ($K=65{,}536$). Within each quantization type and phase (Substitution/Adaptation), optimal values are \underline{underlined}; overall best results per metric are \textbf{bold}.}}
\vspace*{-8pt}
\label{tab:var-ffhq}
\resizebox{0.95\textwidth}{!}{%
\begin{tabular}{lcccccccccc}
\toprule
Approaches  & Types & Phase & Tokens & $K$ & $\mathcal{E}$($\downarrow$) & $\mathrm{U}$ ($\uparrow$)  & PSNR($\uparrow$) & SSIM($\uparrow$) & LPIPS ($\downarrow$) & r-FID($\downarrow$) \\
\midrule
Vanilla VQ & VQ & Substitution & 512 & 65536 & 0.286 & 0.2\% & 24.20 & 66.7 & 0.159 & 7.16\\
EMA VQ & VQ & Substitution & 512 & 65536 & 0.146 & 52.9\% & 27.66 & 77.4 & 0.075 & 2.13\\
Online VQ & VQ & Substitution & 512 & 65536 & 0.186 & 12.4\% & 26.98 & 75.2 & 0.088 & 2.41\\
Wasserstein VQ & VQ & Substitution & 512 & 65536 & 0.132 & \underline{99.5\%} & 28.05 & \textbf{\underline{78.3}} & 0.070 & \underline{2.13}\\
MMD VQ & VQ & Substitution & 512 & 65536 & \textbf{\underline{0.131}} & 99.3\% & \textbf{\underline{28.06}} & \textbf{\underline{78.3}} & \textbf{\underline{0.069}} & 2.14\\
\midrule
Vanilla VP2 & PQ & Substitution & 512 & 65536 & 0.159 & 40.3\% & 27.33 & 76.7 & 0.078 & 2.38\\
EMA VP2 & PQ & Substitution & 512 & 65536 & \underline{0.136} & \textbf{\underline{100\%}} & \underline{27.94} & \underline{77.9} & \underline{0.071} & \underline{2.14}\\
Online VP2 & PQ & Substitution & 512 & 65536 & 0.137 & \textbf{\underline{100\%}} & 27.89 & \underline{77.9} & 0.072 & 2.43\\
Wasserstein VP2 & PQ & Substitution & 512 & 65536 & 0.146 & 99.9\% & 27.64 & 77.6 & 0.079 & 2.95\\
MMD VP2 & PQ & Substitution & 512 & 65536 & 0.139 & \textbf{\underline{100\%}} & 27.83 & \underline{77.9} &  0.072 & 2.47\\
\midrule
FSQ & SQ & Substitution & 512 & 65536 & 0.170 & \textbf{\underline{100\%}} & \underline{27.27} & 75.4 & 0.081 & 3.18\\
LFQ & SQ & Substitution & 512 & 65536 & 0.224 & 7.2\% & 25.62 & 71.6 & 0.110 & 3.70\\
BSQ & SQ & Substitution & 512 & 65536 & \underline{0.155} & \textbf{\underline{100\%}} & 27.04 & \underline{76.7} & \underline{0.080}  & \textbf{\underline{2.09}}\\
\midrule
\midrule
Vanilla VQ & VQ & Adaptation & 512 & 65536 &  0.286 & 0.2\%  & 23.63 & 63.8 & 0.142 & 2.31 \\
EMA VQ & VQ & Adaptation & 512 & 65536 &  0.146 & 52.9\%  & 27.35 & 75.9 & 0.071 &  1.25 \\
Online VQ & VQ & Adaptation & 512 & 65536 & 0.186 & 12.4\%  & 26.57 & 73.3 & 0.085 &  1.77 \\
Wasserstein VQ & VQ & Adaptation & 512 & 65536 &  0.132 & \underline{99.5\%}  & 27.69 & 76.5 & \textbf{\underline{0.067}} &  0.89\\
MMD VQ & VQ & Adaptation & 512 & 65536 & \textbf{\underline{0.131}} & 99.3\%  &  \textbf{\underline{27.75}} & \textbf{\underline{76.9}} & \textbf{\underline{0.067}} &  \textbf{\underline{0.85}}\\
\midrule
Vanilla VP2 & PQ & Adaptation & 512 & 65536 & 0.159 & 40.3\%  & 27.11 & 75.2 & 0.074 & 1.33\\
EMA VP2 & PQ & Adaptation & 512 & 65536 &  \underline{0.136} & \textbf{\underline{100\%}}  & 27.57 & 76.1 & 0.070 & \underline{1.05}\\
Online VP2 & PQ & Adaptation & 512 & 65536 &  0.137 & \textbf{\underline{100\%}}  & 27.54 & 75.8 & 0.069 &  1.10\\
Wasserstein VP2 & PQ & Adaptation & 512 & 65536 & 0.146 & 99.9\% & 27.28 & 75.5 & 0.072 & 1.11 \\
MMD VP2 & PQ & Adaptation & 512 & 65536 & 0.139 & \textbf{\underline{100\%}}  & \underline{27.58}  & \underline{76.4} & \underline{0.068} & 1.16 \\
\midrule
FSQ & SQ & Adaptation & 512 & 65536 &  0.170 & \textbf{\underline{100\%}}  & \underline{26.83} & 73.5 & \underline{0.076} & 1.61 \\
LFQ & SQ & Adaptation & 512 & 65536 &  0.224 & 7.2\%  & 25.23 & 69.9 & 0.102 & 1.74 \\
BSQ & SQ & Adaptation & 512 & 65536 &  \underline{0.155} & \textbf{\underline{100\%}}  & 26.73 & \underline{75.1} & 0.078 & \underline{1.54} \\
\bottomrule
\end{tabular}
}
\vspace{-12pt}
\end{table*}

\begin{table*}[!h]
\centering
\caption{\small{Quantization and reconstruction performance of VQ, PQ, and SQ on CelebA-HQ under matched token count ($T=512$) and code-space cardinality ($K=65{,}536$). Within each quantization type and phase (Substitution/Adaptation), optimal values are \underline{underlined}; overall best results per metric are \textbf{bold}.}}
\vspace*{-5pt}
\label{tab:var-celebahq}
\resizebox{0.95\textwidth}{!}{%
\begin{tabular}{lcccccccccc}
\toprule
Approaches  & Types & Phase & Tokens & $K$ & $\mathcal{E}$($\downarrow$) & $\mathrm{U}$ ($\uparrow$)  & PSNR($\uparrow$) & SSIM($\uparrow$) & LPIPS ($\downarrow$) & r-FID($\downarrow$) \\
\midrule
Vanilla VQ & VQ & Substitution & 512 & 65536 &  0.269 & 0.1\% & 23.39 & 64.9 & 0.176 & 16.2\\
EMA VQ & VQ & Substitution & 512 & 65536 & 0.131 & 31.9\% & 27.87 & 77.4 & 0.071 &  3.17\\
Online VQ & VQ & Substitution & 512 & 65536 & 0.180 & 11.6\% & 26.82 & 74.4 & 0.093 & 4.44\\
Wasserstein VQ & VQ & Substitution & 512 & 65536 & \textbf{\underline{0.111}} & \underline{99.3\%} & 28.32 & \textbf{\underline{78.7}} & \textbf{\underline{0.064}} & \underline{2.63}\\
MMD VQ & VQ & Substitution & 512 & 65536 & \textbf{\underline{0.111}} & 99.1\% & \textbf{\underline{28.33}} & 78.5 & \textbf{\underline{0.064}} &  2.66\\
\midrule
Vanilla VP2 & PQ & Substitution & 512 & 65536 & 0.168 & 11.6\% & 26.67 & 75.1 & 0.086 &  3.78\\
EMA VP2 & PQ & Substitution & 512 & 65536 & \underline{0.115} & \textbf{\underline{100\%}} & \underline{28.25} & \underline{78.5} & \underline{0.065} &  3.03\\
Online VP2 & PQ & Substitution & 512 & 65536 & 0.119 & 99.9\% & 28.14 & 78.3 & 0.066 &  3.01\\
Wasserstein VP2 & PQ & Substitution & 512 & 65536 & 0.123 & 99.9\% & 27.82 & 77.7 & 0.068 &  \underline{2.81}\\
MMD VP2 & PQ & Substitution & 512 & 65536 & 0.120 & \textbf{\underline{100\%}} & 27.99 & 78.1 & 0.069  & 2.92\\
\midrule
FSQ & SQ & Substitution & 512 & 65536 & 0.147 & 99.9\% & \underline{27.58} & 75.9 & 0.075 &  4.64\\
LFQ & SQ & Substitution & 512 & 65536 & 0.211 & 2.9\% & 25.40 & 71.1 & 0.115 &  6.56\\
BSQ & SQ & Substitution & 512 & 65536 & \underline{0.133} & \textbf{\underline{100\%}} & 27.28 & \underline{77.0} & \underline{0.074} &  \textbf{\underline{2.58}}\\
\midrule
\midrule
Vanilla VQ & VQ & Adaptation & 512 & 65536 & 0.269 & 0.1\% &  22.95 & 62.4 & 0.159 & 5.13 \\
EMA VQ & VQ & Adaptation & 512 & 65536 & 0.131 & 31.9\%  & 27.53 & 75.6 & 0.069 & 2.34 \\
Online VQ & VQ & Adaptation & 512 & 65536 &  0.180 & 11.6\%  & 26.30 & 72.0 & 0.090 & 3.46 \\
Wasserstein VQ & VQ & Adaptation & 512 & 65536 &  \textbf{\underline{0.111}} & \underline{99.3\%} & \textbf{\underline{27.98}} & 76.6 & 0.064 & \textbf{\underline{1.73}} \\
MMD VQ & VQ & Adaptation & 512 & 65536 & \textbf{\underline{0.111}} & 99.1\% & \textbf{\underline{27.98}} & \textbf{\underline{76.7}} & \textbf{\underline{0.063}} & 1.76 \\
\midrule
Vanilla VP2 & PQ & Adaptation & 512 & 65536 & 0.168 & 11.6\%  & 26.37 & 72.8 & 0.083 & 2.36 \\
EMA VP2 & PQ & Adaptation & 512 & 65536 & \underline{0.115} & \textbf{\underline{100\%}}  & \underline{27.78} & \underline{76.0} & 0.068 & \underline{1.96} \\
Online VP2 & PQ & Adaptation & 512 & 65536 & 0.119 & 99.9\%  & 27.67 & 75.3 & 0.068 & 2.11 \\
Wasserstein VP2 & PQ & Adaptation & 512 & 65536 &  0.123 & 99.9\%  & 27.67 & 75.9 & 0.068 & 2.02 \\
MMD VP2 & PQ & Adaptation & 512 & 65536 & 0.120 & \textbf{\underline{100\%}} & 27.68 & 75.8 & \underline{0.066} & 2.07 \\
\midrule
FSQ & SQ & Adaptation & 512 & 65536 & 0.147 & 99.9\%  & \underline{27.16} & 73.6 & \underline{0.072} & \underline{2.24} \\
LFQ & SQ & Adaptation & 512 & 65536 & 0.211 & 2.9\%  & 24.88 & 68.1 & 0.111 & 3.35 \\
BSQ & SQ & Adaptation & 512 & 65536 & \underline{0.133} & \textbf{\underline{100\%}} & 26.93 & \underline{74.9} & 0.074 & 2.47 \\
\bottomrule
\end{tabular}
}
\vspace{-6pt}
\end{table*}

\section{Generalization Across Datasets}
\label{appendix:generalization datasets}

To examine whether the main empirical findings generalize across data distributions, we further evaluate VQ, PQ, and SQ on FFHQ~\citep{Karras2018ASG} and CelebA-HQ~\citep{Karras2017ProgressiveGO}. We follow the same controlled latent-space protocol described in Appendix~\ref{appendix:experimental details}, with $T=512$ and $K=65{,}536$ for all methods. Tables~\ref{tab:var-ffhq} and~\ref{tab:var-celebahq} report the results.

\paragraph{Generalization of Quantization Comparisons.}
The results are consistent with the main experimental findings reported in Table~\ref{tab:vq pq sq}. On FFHQ, the best VQ, PQ, and SQ methods achieve distortions of $0.131$, $0.136$, and $0.155$, respectively; after decoder adaptation, their best r-FID values are $0.85$, $1.05$, and $1.54$. On CelebA-HQ, the corresponding best distortions are $0.111$, $0.115$, and $0.133$, while the best r-FID values are $1.73$, $1.96$, and $2.24$. These results show that the advantage of modern VQ methods under controlled feature statistics and matched nominal rate persists across datasets.

\paragraph{Generalization of Distortion--Reconstruction Correlation.}
We further compute Spearman's rank correlations between r-FID and quantization distortion $\mathcal{E}$ or codebook utilization $\mathrm{U}$. On FFHQ, distortion is strongly correlated with r-FID ($\rho=0.979$, $p=5.49\times10^{-9}$), whereas utilization shows a much weaker correlation ($\rho=-0.492$, $p=0.088$). CelebA-HQ exhibits the same pattern, with $\rho=0.944$ ($p=1.29\times10^{-6}$) for distortion and $\rho=-0.559$ ($p=0.047$) for utilization. Together with the ImageNet-1K results in the main text, these findings consistently show that distortion is more strongly associated with reconstruction fidelity than codebook utilization, supporting distortion minimization as a more reliable criterion for intrinsic quantization effectiveness.


\section{Controlled Quantization Comparison in Pixel Space}
\label{appendix:pixel space}
To further examine whether our conclusions depend on the particular latent representation induced by a pretrained encoder, we conduct an additional controlled comparison directly from image pixels on CelebA-HQ~\citep{Karras2017ProgressiveGO}. This experiment removes the dependence on a pretrained latent feature distribution, thereby reducing a potential source of confounding in quantizer comparison. As described in Appendix~\ref{appendix:experimental details}, all methods share the same PixelUnshuffle operation and convolutional projectors, and are evaluated with the same token count $T=4096$ and code-space cardinality $K=65{,}536$. Thus, the compared methods differ primarily in their quantization modules while operating under the same representation pipeline and nominal rate. Table~\ref{tab:pixel-celebahq} reports the results.

\paragraph{Quantization and Reconstruction Performance.}
The pixel-space results are consistent with the latent-space findings. MMD VQ achieves the lowest quantization distortion, $\mathcal{E}=0.0021$, compared with $0.0023$ for the best PQ variant and $0.0032$ for the best SQ variant. It also achieves the best r-FID of $3.64$, compared with $3.77$ for the best PQ method and $4.54$ for the best SQ method. More broadly, modern VQ variants achieve strong reconstruction performance despite PQ and SQ methods often attaining higher codebook utilization. These results indicate that the relative advantage of VQ is not specific to the pretrained latent space used in the main experiments.

\paragraph{Distortion--Reconstruction Correlation.}
We further compute Spearman's rank correlations between r-FID and quantization distortion $\mathcal{E}$ or codebook utilization $\mathrm{U}$. Distortion exhibits a strong positive correlation with r-FID ($\rho=0.947$, $p=2.91\times10^{-6}$), whereas codebook utilization shows a substantially weaker and statistically insignificant negative correlation ($\rho=-0.413$, $p=0.182$). This closely mirrors the latent-space results and provides additional evidence that distortion is more strongly associated with reconstruction fidelity than codebook utilization. Overall, the pixel-space experiments support the same conclusions as the latent-space analysis while reducing dependence on a particular pretrained feature space.

\noindent
\begin{table*}[!t]
\centering
\caption{\small{Quantization and reconstruction performance of VQ, PQ, and SQ methods on CelebA-HQ under the controlled pixel-space setting with matched token count ($T=4096$) and code-space cardinality ($K=65{,}536$). Within each quantization type, the best results are \underline{underlined}, while the overall best result for each metric is shown in \textbf{bold}. Results for Online VQ are omitted because the experiment could not be completed due to out-of-memory errors.}}
\vspace*{-5pt}
\label{tab:pixel-celebahq}
\resizebox{\textwidth}{!}{%
\begin{tabular}{lccccccccc}
\toprule
Approaches  & Types & Tokens & $K$ & $\mathcal{E}$($\downarrow$) & $\mathrm{U}$ ($\uparrow$)  & PSNR($\uparrow$) & SSIM($\uparrow$) & LPIPS ($\downarrow$) & r-FID($\downarrow$) \\
\midrule
Vanilla VQ & VQ  & 4096 & 65536 &  0.0055 & 0.2\% & 30.99 & 86.1 & 0.085 & 8.45\\
EMA VQ & VQ  & 4096 & 65536 & 0.0033 & 4.2\%  & 33.33 & 89.4 & 0.057 &  6.51\\
Online VQ & VQ & 4096 & 65536 & - & - & - & - & - & -  \\
Wasserstein VQ & VQ & 4096 & 65536 & 0.0022 & 96.0\% & 35.32 & \underline{92.9} & 0.026 & 3.75\\
MMD VQ & VQ  & 4096 & 65536 & \textbf{\underline{0.0021}} & \underline{98.5\%} & \textbf{\underline{35.37}} & \underline{92.9} & \textbf{\underline{0.023}} & \textbf{\underline{3.64}}\\
\midrule
Vanilla VP2 & PQ  & 4096 & 65536 & 0.0024 & \textbf{\underline{100\%}} & 34.71 & \textbf{\underline{93.0}} & \underline{0.025} &  3.93\\
EMA VP2 & PQ & 4096 & 65536 & 0.0027 & \textbf{\underline{100\%}} & 34.12 & 90.7 & 0.057 & 6.37\\
Online VP2 & PQ & 4096 & 65536 & \underline{0.0023} & \textbf{\underline{100\%}} & 34.96 & 92.7 & 0.027 & 4.00\\
Wasserstein VP2 & PQ & 4096 & 65536 & 0.0024  & 99.2\% & 34.88  & 92.5  & 0.027 & 4.21\\
MMD VP2 & PQ & 4096 & 65536 & \underline{0.0023} & \textbf{\underline{100\%}} & \underline{35.15} & 92.9 & \underline{0.025} & \underline{3.77}\\
\midrule
FSQ & SQ & 4096 & 65536 & \underline{0.0032} & \textbf{\underline{100\%}} & \underline{33.40} & \underline{91.8} & \underline{0.031} & \underline{4.54}\\
LFQ & SQ & 4096 & 65536 & 0.0050 & 10.1\% & 31.27 & 87.2 & 0.071 & 7.99\\
BSQ & SQ & 4096 & 65536 & 0.0041 & 99.8\% & 32.15 & 90.1 & 0.039 & 4.75\\
\bottomrule
\end{tabular}
}
\vspace{-15pt}
\end{table*}

\end{appendices}

\clearpage

\end{document}